\documentclass[twoside]{article}

\usepackage[accepted]{aistats2026}

\usepackage[utf8]{inputenc}
\usepackage{amsmath,amsfonts}
\usepackage{amssymb}
\usepackage{graphicx}
\usepackage{tikz}
\usepackage{microtype}
\usepackage{subfigure}
\usepackage{natbib}
\usepackage{mathtools}
\newcommand{\R}{\mathbb{R}}
\usepackage{amsthm}
\usepackage{nccmath}
\usepackage{xspace}
\usepackage{xcolor}
\usepackage{booktabs}
\usepackage{array}
\usepackage{subcaption}
\definecolor{MyBlue}{rgb}{0.12, 0.12, 0.76}
\usepackage[colorlinks,allcolors=MyBlue]{hyperref}
\usepackage{pifont}


\usepackage[group-separator={,}]{siunitx}
\usepackage{url}
\usepackage{enumitem}
\usepackage{pdflscape}
\usepackage{verbatim}
\usepackage{arydshln}
\usepackage{multirow}
\usepackage{bigdelim}
\usepackage{stfloats}

\usepackage{pgfplots}
\usepackage{cleveref}

\usetikzlibrary{shadows,arrows,decorations,decorations.shapes,backgrounds,shapes,snakes,automata,fit,petri,shapes.multipart,calc,positioning,shapes.geometric,graphs,graphs.standard,plotmarks,math,arrows.meta}
\usepackage{tikz-qtree}

\usepackage{thmtools}
\usepackage{thm-restate}

\usepackage{algorithm}
\usepackage{algpseudocode}
\usepackage{algorithmicx}
\let\oldReturn\Return
\renewcommand{\Return}{\state\oldReturn}
\algtext*{EndWhile}
\algtext*{EndIf}
\algtext*{EndForAll}
\algtext*{EndFor}
\algtext*{EndFunction}

\DeclareMathOperator*{\E}{\mathbb{E}}

\allowdisplaybreaks 

\newcommand\bbr{\mathbb{R}}
\newcommand\bbrpos{\mathbb{R}_{\geq 0}}
\newcommand\bbrspos{\mathbb{R}_{> 0}}
\newcommand\bbn{\mathbb{N}}
\newcommand\ep{\varepsilon}

\newcommand\bfone{\boldsymbol{1}}
\newcommand\norm[1]{\|#1\|}
\newcommand\X{\mathcal{X}}

\newcommand\hmub{\hat{\mu}_B}

\newcommand{\G}{\mathcal{G}}

\newcommand{\B}{\mathcal{B}}

\def\B{\mathcal{B}}

\def\B{\mathcal{B}}

\def\G{\mathcal{G}}

\def\R{\mathbb{R}}

\def\RegT{\textnormal{Reg}(T)}

\newcommand\D{\mathcal{D}}

\newenvironment{proofidea}[1][]%
{\par\noindent\textit{Proof idea#1.}\ }%
{\hfill$\square$\par}

\declaretheorem[name=Lemma,sibling=theorem]{lemma}

\crefname{theorem}{Thm.}{Thms.}
\Crefname{lemma}{Lem.}{Lems.}

\usepackage{tikz,xcolor}
\usetikzlibrary{patterns,arrows}
\renewcommand{\thefootnote}{\fnsymbol{footnote}}

\begin{document}

\runningtitle{Risk-Sensitive Abstention in Bandits}
\runningauthor{Sarah Liaw, Benjamin Plaut}
\twocolumn[

\aistatstitle{Learning When Not to Learn:\\ Risk-Sensitive Abstention in Bandits with Unbounded Rewards}
\aistatsauthor{
    Sarah Liaw\footnotemark[1]
    \And 
    Benjamin Plaut\footnotemark[1]
}
\aistatsaddress{Harvard University\\ University of California, Berkeley\And University of California, Berkeley}

]

\footnotetext[1]{Equal contribution.}
\renewcommand{\thefootnote}{\arabic{footnote}}

\begin{abstract}
    In high-stakes AI applications, even a single action can cause irreparable damage. However, nearly all of sequential decision-making theory assumes that all errors are recoverable (e.g., by bounding rewards). Standard bandit algorithms that explore aggressively may cause irreparable damage when this assumption fails. Some prior work avoids irreparable errors by asking for help from a mentor, but a mentor may not always be available. In this work, we formalize a model of learning with unbounded rewards without a mentor as a two-action contextual bandit with an abstain option: at each round the agent observes an input and chooses either to abstain (always 0 reward) or to commit (execute a preexisting task policy). Committing yields rewards that are upper-bounded but can be arbitrarily negative, and the commit reward is assumed Lipschitz in the input. We propose a caution-based algorithm that learns when not to learn: it chooses a trusted region and commits only where the available evidence does not already certify harm. Under these conditions and i.i.d. inputs, we establish sublinear regret guarantees, theoretically demonstrating the effectiveness of cautious exploration for deploying learning agents safely in high-stakes environments.\looseness-1

\end{abstract}

\section{INTRODUCTION}

With artificial intelligence becoming ubiquitous, many learning systems are now deployed in unpredictable, safety-critical domains, such as process control and manufacturing robotics, autonomous driving, and surgical assistance. In these settings, a single ill-chosen action can cause irreparable and lasting damage with no opportunity for subsequent recovery. For instance, a self-driving car cannot compensate for a deadly crash by later driving more safely, nor can a medical robot undo a fatal mistake during surgery. Following \citet{plaut2025safe,plaut2025avoiding}, we refer to such irreparable errors as \emph{catastrophes}. \looseness-1

Despite the risks such deployments pose, there is limited work (and limited theoretical work in particular) on how an agent can learn without ever incurring an irreparable error. The possibility of catastrophes challenges standard frameworks for sequential decision making, especially the familiar notion of \emph{optimism under uncertainty}. Optimism effectively assumes that early mistakes can be offset (or be compensated for) by later gains, an assumption that is inappropriate when errors are irrecoverable. Instead, these settings call for \emph{pessimism under uncertainty}: when evidence is insufficient, prefer inaction to risky action. \looseness-1

One approach to mitigate these problems is to let the agent ask for help from a mentor in unfamiliar or risky situations. Such human-in-the-loop oversight can block unsafe actions and prevent irreparable errors (even if ordinary, recoverable errors still occur). However, this approach depends on the availability of a capable mentor, which can be costly or impractical at scale. This motivates a mentor-free alternative: \textbf{can an agent avoid irreparable errors on its own by acting cautiously when inputs appear unfamiliar?} \looseness-1

We propose a model of learning in the presence of irreparable costs without a mentor but with an option to abstain from action. The key question is when to abstain, i.e., when not to learn. To focus on this question, we assume the agent has previously learned a baseline policy that works well in-distribution but behaves unpredictably elsewhere. This allows us to streamline the model to two actions: \emph{abstain} (do nothing) and \emph{commit} (follow the baseline policy). Abstaining yields a deterministic safe reward $r(x,0)=0$, while committing yields a reward $r(x,1)\in(-\infty,1]$\footnote{The asymmetric bounds on the commit reward reflect that a single action can be catastrophic, whereas it is rare for a single action to yield arbitrarily large benefit.}. \looseness-1

We treat the origin as fully ``in-distribution'' and assume the baseline policy is beneficial there: $r(\mathbf{0},1)>0$. We use the distance from the origin $\|x\|$ as a measure of how out-of-distribution (OOD) an input is. The commit reward is assumed $L$-Lipschitz, capturing the idea that similar inputs yield similar outcomes. For our main results we focus on a fixed distribution $\nu$; for our impossibility results we also consider a $T$-dependent distribution $\nu_T$. \looseness-1

We formalize the tension between exploration and safety via two negative results. First, in the worst case, any algorithm that begins by always exploring (i.e., commits on the first round regardless of the input) can suffer infinite expected regret (\Cref{thm:neg-incautious}). Second, when every input lies uniformly far OOD, there is no safe way to explore to identify a beneficial committing region, and sublinear regret is impossible (\Cref{thm:neg-adv}). Together, these results delineate both the necessity and the limits of caution. \looseness-1

Motivated by this perspective, we develop a caution-based algorithm that learns only when it can guarantee that an error is not catastrophic (which essentially corresponds to not-too-OOD inputs). This approach yields sublinear expected regret for i.i.d. inputs from any fixed distribution, with bounds that also reflect how often the agent encounters far OOD inputs, while prioritizing the avoidance of irreparable errors. As in standard Lipschitz contextual bandits, however, this regret bound has an exponential dependence on the dimension $n$, and is therefore most meaningful in low-dimensional settings.\looseness-1

\textbf{Contributions.} Our contributions are:
\begin{enumerate}[leftmargin=1.5em,
    topsep=0.5ex,
    partopsep=0pt,
    parsep=0pt,
    itemsep=0.5ex]
  \item We introduce a formal model of learning with irreparable costs and no external mentor.
  
  \item We prove two impossibility results that delineate the necessity and limits of caution.
  
  \item We develop a caution-based algorithm that achieves sublinear regret for any fixed input distribution.
\end{enumerate}

\textbf{Organization.} \autoref{sec:model} introduces the formal model and notation. \autoref{sec:virtues-limits} presents the impossibility results (\Cref{thm:neg-incautious,thm:neg-adv}) and their implications for exploration. \autoref{sec:algorithm} describes our caution-based learning algorithm and states the main regret bound. \autoref{sec:roadmap} outlines the proof strategy and supporting lemmas. \autoref{app:proofs} presents the complete proof, and \autoref{app:sim} provides some numerical simulations. \looseness-1

\section{RELATED WORK}\label{sec:related}
Most prior work on sequential decision-making and safe exploration focuses on settings where errors are ultimately recoverable; here we contrast this with our setting where individual actions can cause irreparable harm. \looseness-1

\subsection{Sequential decision-making when all errors are recoverable}
The literature on sequential decision-making is vast, spanning bandit problems, reinforcement learning, and online learning. See \citet{slivkins2019introduction}, \citet{sutton1998reinforcement}, and \citet{cesa2006prediction} for introductions to these (somewhat overlapping) topics, respectively. However, nearly all of this work assumes explicitly or implicitly that any error can be recovered from. This assumption enables the agent to ignore risk and simply try all possible behaviors, since no matter how badly it performs in the short term, it can always eventually make up for it. Indeed, most sequential decision-making algorithms with formal regret bounds have this general structure. \looseness-1

This assumption can manifest in different ways. In bandit settings, it suffices to assume that rewards are bounded (or at least have bounded expectation). This assumption implies that the expected regret from any action on any time step is always bounded, which is sufficient for the risk-agnostic exploration mentioned above. In contrast, we allow unbounded negative rewards so that actions can be arbitrarily costly. Indeed, our first negative result (\Cref{thm:neg-incautious}) relies on the expected regret for a single action potentially being infinite in our model. 

In Markov Decision Processes (MDPs), the agent's actions determine the next state via a transition function, so in addition to bounded rewards, one typically assumes that either the environment is reset at each of each ``episode'' (e.g., \citealp{azar_minimax_2017}) or that any state is reachable from any other (e.g., \citealp{jaksch_near-optimal_2010}). The reliance of standard MDP algorithms on these assumptions was observed by \citet{moldovan_safe_2012, cohen_curiosity_2021}, among others. 

Regardless of the specific form of this assumption, it clearly does not hold in safety-critical contexts where a single action can be catastrophic. 

\subsection{Safe exploration}

These issues have motivated a wide field of safe exploration. A full survey is beyond the scope of this paper (see \citealp{garcia_comprehensive_2015, gu_review_2024, krasowski_provably_2023,tan2022survey} for surveys), so we cover only the most relevant prior work. Avoiding irreparable errors while learning has also been studied empirically across multiple domains (e.g., \citealp{saunders2017trialerrorsafereinforcement, moldovan2012safeexplorationmarkovdecision, wachi2023safeexplorationreinforcementlearning,zhao_state-wise_2023,perkins-lyapunov-safe-rl2003}), but here we focus on theoretical work, which is most relevant to our setting. 

Safe exploration is modeled in two main ways. The first approach is to require the agent to satisfy some sort of constraint in addition to maximizing reward. The constraint can be entirely separate from reward, as in the case of constrained MDPs \citep{altman1999constrained}, or they can be related to the reward (e.g., the agent's reward must always exceed some baseline). When zero or near-zero constraint violation is required, these formalisms do capture the possibility of irreparable errors. The second approach treats reward as the sole objective, with safety as a necessary but not sufficient property for maximizing reward. Here, irreparable errors correspond to either unboundedly negative rewards (our work falls into this category) or inescapable ``trap'' states with poor reward. An agent that obtains very negative rewards or enters trap states clearly cannot obtain high reward. 

Both of these models must contend with a fundamental obstacle: how does one learn which actions are catastrophic without trying those actions directly? This can be formalized by the so called ``Heaven or Hell problem''~\citep{plaut2025safe}. Suppose there are two available actions, where one has unbounded positive reward and the other has unbounded negative reward. In this case, the agent can do no better than simply guessing and can never guarantee good regret. This problem shows that some sort of additional assumption is necessary for any meaningful regret guarantees. Below, we categorize work within safe exploration based on which assumption(s) it uses for this purpose. 

\textbf{Full prior knowledge.} Perhaps the simplest approach is to assume that the agent knows the precise safety constraint upfront (see \citealp{zhao_state-wise_2023} for a survey). This immediately resolves the Heaven or Hell problem; indeed, it eliminates the need for the agent to ``learn when not to learn'' at all. However, full knowledge of the safety constraint may not hold in practice. In contrast, we only assume that the (1) baseline policy performs well in-distribution and (2) the agent can always safely abstain. 

\textbf{Learning constraints using a safe fallback action.} There is a growing body of work which shares our assumption of a safe fallback action. \citet{liu2021learning, stradi2024learning} use this approach in the constrained MDP model, while \citet{wu2016conservativebandits,kazerouni2017conservativecontextuallinearbandits,lin2022stochasticconservativecontextuallinear,chen2022strategiessafemultiarmedbandits} require the reward to exceed a fixed baseline in a bandit model. These papers generally rely on a pair of subtle but crucial assumptions to obtain zero constraint violation: (1) the constraint violation on any given time step is bounded and (2) the baseline policy satisfies the constraints with a known amount of slack (this property is often referred to as \emph{Slater's gap}~\citep{bernasconi2024primaldualmethodsbanditsstochastic}, although not all of the above papers use this terminology). This combination of assumptions enables the agent to still explore aggressively with some known probability. Furthermore, the resulting bounds typically depend inversely on Slater's gap. 

Our work is complementary to each of these two assumptions. First, rather than assuming global boundedness, we assume that rewards decrease at a bounded \emph{rate}, i.e., rewards are Lipschitz continuous. Second, rather than dependence on the reward or cost function (in the form of Slater's gap), our bounds depend on the input distribution: specifically, our bounds degrade as the agent sees more OOD inputs. Our approach may be more or less realistic depending on the specific context, but it notably differs from the typical way fallback actions are utilized.

\textbf{Risk-sensitive contextual Lipschitz bandits.} We are not aware of prior work on risk-sensitive contextual bandits with Lipschitz rewards. To our knowledge, the Lipschitz contextual bandit literature (e.g., \citealp{slivkins_contextual_2011}) does not consider risk-sensitivity, and the risk-sensitive bandit literature is either non-contextual (e.g., \citealp{wu2016conservativebandits}) or studies linear bandits instead of Lipschitz bandits (e.g., \citealp{lin2022stochasticconservativecontextuallinear}). Because of these differences, it would be difficult to make an apples-to-apples comparison to these algorithms in our simulations, so we do not include these algorithms.

\textbf{Asking for help.} Perhaps the most common approach in this model is relying on external supervision. This is a growing body of work which uses limited queries to a mentor to prove formal regret guarantees in the presence of irreversible dynamics \citep{cohen_curiosity_2021, cohen_pessimism_2020, kosoy_delegative_2019, maillard_active_2019, plaut2025avoiding, plaut2025safe}. However, as the number of deployed AI systems continues to grow, it may be impractical for each one to have a human supervisor. Even in cases where external help will eventually become available, the agent may need to behave safely on its own in the short-term. These considerations motivate our study of how to learn safely in the absence of external help. 

\subsection{Other related work}

We briefly discuss some topics that are less directly relevant but still worth mentioning. One is the heavy-tailed bandit model \citep{bubeck2013bandits,agrawal2021regret}, which studies the case where reward distributions are not subgaussian and thus less predictable. While this model does incorporate elements of safety, as long as the expected reward from any action is bounded, risk-agnostic exploration remains valid (as discussed above). Another topic adjacent to our work is the standard Lipschitz bandit model with bounded rewards and bounded domain (see, e.g., Chapters 4 and 8 of \citealp{slivkins_contextual_2011}). This work shares some similarities with ours, like the algorithmic use of discretization. However, the core of our paper is removing the boundedness assumptions, which introduces a host of new challenges. Finally, there is complementary work on abstention with bounded rewards~\citep{neu2020fastratesonlineprediction,yang2024multiarmedbanditsabstention}. While this line of work also demonstrates the benefits of abstention, it does not address the possibility of irreparable errors.

\section{PRELIMINARIES}\label{sec:model}
We study a two-action contextual bandit model in which, on each round, the agent observes an input and chooses either to commit, thus executing a fixed task policy that may yield risky outcomes, or to abstain, receiving a safe default reward of zero. In this section, we introduce the formal notation and assumptions used throughout. 

For $k\in\bbn$, let $[k]=\{1,\dots,k\}$. Let $\X=\bbr^n$ be the input space, $T\in\bbn$ be the time horizon, and $\|\cdot\|$ be the Euclidean norm (though one could also consider a more general metric space).  On each time step $t\in[T]$, the agent observes an input $x_t\in\X$, chooses an action $y_t\in\{0,1\}$, and receives a (noisy) scalar reward; the precise noise assumptions are stated below. 

\paragraph{Actions and Rewards.}
We interpret $y_t=0$ as ``abstaining'', a safe default which deterministically yields $r(x_t,0)=0$ for any $x_t \in \X$. We interpret $y_t=1$ as ``committing'', which executes a preexisting policy whose reward $r(x_t,1)$ may be arbitrarily negative (catastrophic) but is assumed to have a constant upper bound (rescaled to $1$ without loss of generality). This captures the asymmetry of high-stakes settings where catastrophic losses can be unbounded in magnitude, whereas gains typically saturate. 

\paragraph{Input models.} 
We assume inputs are i.i.d.\ draws from an unknown distribution $\nu$ on $\X$, i.e., $x_1,\dots,x_T \stackrel{\text{i.i.d.}}{\sim}\nu$. We typically take $\nu$ to be fixed, but in our impossibility results we also consider the case of $T$-dependent $\nu$ (denoted $\nu_T$). 

We assume bandit feedback: the agent observes only the realized reward of its chosen action. Abstaining provides no information about the counterfactual commit reward $r(x_t,1)$, so the agent cannot ``learn by abstaining''. Formally, at round $t$ the learner observes
$$r_t = r(x_t,y_t) + \eta_t,$$
where $(\eta_t)_{t=1}^T$ are i.i.d. zero-mean $\sigma$-subgaussian noise variables, independent of $(x_t)$ and of the learner’s internal randomness (specified formally in Def.~\ref{def:subgaussian}).


\paragraph{Regularity.}
We make two assumptions on the reward function: (i) the commit reward $r(\cdot,1)$ is $L$-Lipschitz in the Euclidean norm, i.e., there exists $L>0$ such that for all $x,x'\in\X$, $|r(x,1)-r(x',1)| \le L\|x-x'\|$. This is a standard smoothness condition in Lipschitz bandit models (see, e.g., \citealp{slivkins2019introduction}) and captures the intuition that similar inputs yield similar commit rewards. Since $r(x,0)\equiv 0$, the abstain reward is $0$-Lipschitz. (ii) The in-distribution baseline input yields strictly positive reward when committing, i.e. $r(\mathbf{0},1)>0$. This guarantees that committing is beneficial somewhere (at the origin); without it, the optimal policy would be to always abstain and cautious learning would be impossible. 

\paragraph{Objective.} The agent's goal is to minimize its (expected) regret, which is the difference between its cumulative reward and the optimal cumulative reward. Formally, define
$$\RegT = \sum_{t=1}^T \Big(\max_{y^\star\in\{0,1\}} r(x_t,y^\star)\;-\;r(x_t,y_t)\Big).$$
We take the expectations over the input process (in the stochastic model), the observation noise, and the learner’s internal randomness. The goal is to achieve sublinear expected regret, i.e., $\E[\RegT]=o(T)$, equivalently $\E[\RegT]/T\to 0$ as $T\to\infty$.

\section{THE VIRTUES AND LIMITS OF CAUTION}
\label{sec:virtues-limits}
In this section, we provide two impossibility results that demonstrate the importance and limitations of caution in high-stakes, unbounded reward bandits. 

First, \emph{caution is necessary}: if an agent commits with non-negligible probability on inputs that are far OOD, catastrophic tail losses dominate---indeed, even a single risk-agnostic exploratory commit can incur infinite expected regret. This kind of ``incautious exploration'' is exactly how standard bandit algorithms behave when they begin by pulling every arm at least once. Second, \emph{caution has limits}: when the input stream is uniformly far OOD, there is no way to explore cautiously to identify a beneficial committing region without risking catastrophe. In such settings, sublinear regret is not possible and the optimal strategy is to abstain on every time step.

\begin{theorem}[The need for caution]
\label{thm:neg-incautious}
Let $\nu$ be any distribution over $\X$ such that $\E_{x\sim \nu}[\|x\|] = \infty$ and assume $x_1,\dots,x_T \sim \nu$ i.i.d. Then there exists a reward function $r$ such that any algorithm which always commits on the first time step satisfies $\E[\RegT] = \infty$.
\end{theorem}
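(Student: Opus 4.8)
\noindent\textit{Proof idea.} The plan is to exhibit a single universally bad reward function---one that does not even depend on $\nu$---for which a single forced commit inherits the (infinite) first moment of $\|x\|$ under $\nu$. Keep the abstain reward $r(x,0)\equiv 0$ as the model requires, and define the commit reward $r(x,1) = 1 - L\|x\|$ for an arbitrary fixed $L>0$. First I would check this is a legitimate instance of the model: by the reverse triangle inequality, $|r(x,1) - r(x',1)| = L\bigl|\|x\| - \|x'\|\bigr| \le L\|x-x'\|$, so $r(\cdot,1)$ is $L$-Lipschitz; moreover $r(x,1)\le 1$ for every $x$ (so it lies in $(-\infty,1]$), and $r(\mathbf{0},1) = 1 > 0$. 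The salient feature is that $r(x,1) = 1 - L\|x\| \to -\infty$ as $\|x\|\to\infty$: committing on an input far from the origin is arbitrarily costly.

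Next I would isolate the first round. By hypothesis the algorithm plays $y_1 = 1$ no matter what $x_1$ is. Since the best action on round $1$ earns $\max\{0,\,1-L\|x_1\|\} = (1-L\|x_1\|)^+$, the instantaneous regret on round $1$ equals
$$(1 - L\|x_1\|)^+ - \bigl(1 - L\|x_1\|\bigr) = \bigl(L\|x_1\| - 1\bigr)^+ .$$
Every per-round term of $\RegT$ has the form $\max_{y^*} r(x_t,y^*) - r(x_t,y_t)$ with $y_t\in\{0,1\}$ and is therefore nonnegative, so discarding rounds $t\ge 2$ only lowers the sum, giving the pointwise bound $\RegT \ge (L\|x_1\| - 1)^+$. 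In particular $\RegT \ge 0$, so $\E[\RegT]$ is well defined in $[0,\infty]$.

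Finally I would take expectations over all the randomness (the input process, the observation noise, and the learner's internal coins). The lower bound depends only on $x_1\sim\nu$, so
$$\E[\RegT] \;\ge\; \E_{x\sim\nu}\bigl[(L\|x\| - 1)^+\bigr] \;\ge\; L\,\E_{x\sim\nu}[\|x\|] - 1 \;=\; \infty,$$
using the hypothesis $\E_{x\sim\nu}[\|x\|] = \infty$. Hence $\E[\RegT] = \infty$, as claimed.

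I do not expect a genuine technical obstacle: once the reward function is fixed the argument is a one-line tail estimate, and the construction is independent of $T$ and of $\nu$. The only points that need care are (i) confirming that $r(x,1) = 1 - L\|x\|$ really satisfies the model's regularity assumptions (Lipschitz, bounded above, strictly positive at the origin), and (ii) noting that the $\sigma$-subgaussian observation noise plays no role, since regret is measured through the true reward $r(x_t,y_t)$ rather than the noisy feedback $r_t$---so no concentration or truncation step is needed. The conceptual takeaway is that, absent any boundedness assumption on the commit reward, forcing even one exploratory commit makes the expected regret as heavy-tailed as $\|x\|$ itself, regardless of how the algorithm behaves afterward.
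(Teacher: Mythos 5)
Your proposal is correct and follows essentially the same route as the paper: choose $r(x,1)=1-L\|x\|$, lower-bound the regret by the forced first-round commit, and conclude $\E[\RegT]\ge L\,\E_{x\sim\nu}[\|x\|]-1=\infty$. Your extra checks (Lipschitzness, the positive-part refinement $(L\|x_1\|-1)^+$, and the irrelevance of the observation noise) are fine but do not change the argument.
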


\begin{proof}
Define $r(x,1) = 1 - L\norm{x}$ for all $x \in \X$. Then
\begin{align*}
\E[\RegT] =&\ \E\left[\sum_{t=1}^T \left(\max_{y^*\in\{0,1\}} r(x_t,y^*) - r(x_t,y_t)\right)\right]\\
\ge&\ \E\left[\max_{y^*\in\{0,1\}} r(x_1,y^*) - r(x_1,y_1)\right]\\
\ge&\ \E[0 - (1-L\norm{x_1})]\\
=&\ L\E_{x\sim \nu}[\norm{x}]-1\\
=&\ \infty
\end{align*}
as required.
\end{proof}

The proof can easily be modified to handle the cases where the first commit is taken with constant probability (rather than probability 1) or where the algorithm abstains for a constant number of initial rounds. Essentially, this negative result applies to any algorithm that is not cautious, i.e., that explores without considering how OOD $x_t$ is. 

However, caution can only get us so far. While it prevents catastrophic first commits, some exploration is necessary to obtain sublinear regret. If all inputs are far OOD, then there is no safe way to explore, so the agent has no choice but to always abstain. Equivalently, this can be phrased by considering i.i.d.\ inputs from a $T$-dependent distribution $\nu_T$ supported on $\{x:\|x\|=T\}$. 

\begin{theorem}[The limits of caution]
\label{thm:neg-adv}
Let $\nu_T$ be any distribution supported on $\{x:\|x\|=T\}$, and suppose $x_1,\dots,x_T \stackrel{\text{i.i.d.}}{\sim} \nu_T$. Then no algorithm can guarantee $\E[\RegT]\in o(T)$.
\end{theorem}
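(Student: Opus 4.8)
The plan is a two-point (Le Cam--type) argument, of the sort underlying the so-called ``Heaven or Hell'' obstruction: I would exhibit two reward functions that look identical to the learner until it first commits, where committing is strictly beneficial under one and catastrophic under the other, so no single algorithm can do well against both. Fix $\rho\in(0,1]$ and $\epsilon\in(0,\rho)$, and define
\[
r_A(x,1)=\max\{\epsilon,\ \rho-L\norm{x}\}
\quad\text{and}\quad
r_B(x,1)=\rho-L\norm{x},
\]
with $r_A(x,0)=r_B(x,0)=0$ in both cases. Each is $L$-Lipschitz, bounded above by $\rho\le 1$, and has $r(\mathbf 0,1)=\rho>0$, so both are legal instances; moreover, for every $T\ge(\rho-\epsilon)/L$ they take the constant values $r_A(\cdot,1)\equiv\epsilon>0$ and $r_B(\cdot,1)\equiv\rho-LT<0$ on the support $\{x:\norm x=T\}$ of $\nu_T$. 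So under $r_A$ the optimal policy commits every round (gap $\epsilon$ per round over abstaining), while under $r_B$ it abstains every round (gap $LT-\rho$ per round over committing).

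The key step is an indistinguishability/coupling argument. I would run the learner on one probability space carrying the same inputs $x_1,\dots,x_T\sim\nu_T$, the same noise $\eta_1,\dots,\eta_T$, and the same internal randomness, under either instance. As long as the learner has only abstained, every reward it observes equals $0$ plus noise, whose law is instance-independent; so by induction on the round, the learner's actions coincide under $r_A$ and $r_B$ up to and including its first commit. Hence the first-commit time $\tau=\min\{t\le T:y_t=1\}$ (set $\tau=\infty$ if it never commits) is literally the same random variable under both instances, so $p:=\Pr[\tau\le T]$ does not depend on the instance.

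What remains is to tally regret on each instance. Under $r_B$ committing is never optimal, so $\RegT=(LT-\rho)\,\#\{t:y_t=1\}\ge(LT-\rho)\,\1[\tau\le T]$ and thus $\E_B[\RegT]\ge(LT-\rho)\,p$. Under $r_A$ abstaining is never optimal, so $\RegT=\epsilon\,\#\{t:y_t=0\}=\epsilon\big(T-\#\{t:y_t=1\}\big)$, and using $\#\{t:y_t=1\}\le T\,\1[\tau\le T]$ gives $\E_A[\RegT]\ge\epsilon T(1-p)$. Therefore every algorithm incurs $\max\{\E_A[\RegT],\E_B[\RegT]\}\ge\max\{\epsilon T(1-p),\,(LT-\rho)p\}$, which is $\Omega(T)$ for every $p\in[0,1]$ as soon as $T\ge 2\rho/L$ (e.g.\ split on whether $p\le\tfrac12$). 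Since the reward function is adversarial, no algorithm can guarantee $\E[\RegT]=o(T)$ for this $\nu_T$.

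I expect the coupling/indistinguishability step to be the part that needs the most care: one must argue that the \emph{entire} pre-commit trajectory---and hence the law of $\tau$---is unchanged by which reward function is in force, which is precisely where the hypothesis that $\nu_T$ is supported on the sphere $\norm x=T$ matters (no commit is ever informative ``for free''). The remaining pieces---verifying that $r_A,r_B$ are Lipschitz with positive reward at the origin, and the per-instance regret identities---are routine.
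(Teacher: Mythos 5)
Your proof is correct and follows essentially the same route as the paper: two reward instances that are indistinguishable from abstention feedback until the first commit (the paper uses $r^+(x,1)\equiv 1$ and $r^-(x,1)=1-L\|x\|$ and averages over a uniform prior on the pair), combined with a case split on whether the agent ever commits. Your explicit coupling argument showing the first-commit time has the same law under both instances simply spells out what the paper asserts tersely (``$y_i$ is independent of $r$''), so there is no substantive difference in approach.
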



\begin{proof}

Define $r^-(x,1) := 1 - L\|x\|$ and $r^+(x,1) := 1$, with $r^\pm(x,0):=0$. Since we only care about asymptotics, we can restrict our attention to $T > 1/L$. Then for $\|x_t\| = T$, optimal behavior for $r^+$ is to always commit, while optimal behavior for $r^-$ is to always abstain. We show $\max_{r \in \{r^-, r^+\}} \E[\RegT] \in \Omega(T)$. 

To do so, we use a mild version of the probabilistic method. Let $U(r^-,r^+)$ be the uniform distribution over $\{r^-,r^+\}$. It suffices to show
$\E_{r\sim U} \E[\RegT] \in \Omega(T)$, where the second expectation is over $x_1,\dots,x_T$ and $y_1,\dots,y_T$. 
Let $\mathcal{E}$ be the event that the agent ever commits. If $\mathcal{E}$ holds, there exists $i\in[T]$ with $y_i=1$. Since $y_i$ is independent of $r$,
\begin{align*}
&\ \E_r\ \E[\RegT \mid \mathcal{E}]
\\ =&\ \E_r\ \E\left[\sum_{t=1}^T \left(\max_{y^*\in\{0,1\}} r(x_t,y^*) - r(x_t,y_t)\right) \mid \mathcal{E}\right]\\
\ge&\ \Pr[r = r^-] \E\left[\max_{y^*\in\{0,1\}} r(x_i,y^*) - r(x_i,y_i)\mid r=r^-\right]\\
=&\ \frac{LT-1}{2}
\end{align*}
On the other hand, if $\mathcal{E}$ does not occur, then
\begin{align*}
&\ \E_r\ \E[\RegT \mid \neg\mathcal{E}]\\
\ge\;& \Pr[r = r^+] \E\Biggl[
\sum_{t=1}^T \Bigl(
\max_{y^*\in\{0,1\}} r^+(x_t,y^*) - r^+(x_t,y_t)
\Bigr) \\
&\qquad\qquad\Bigm| \neg\mathcal{E}
\Biggr] \\
\ge&\  \frac{T}{2}.
\end{align*}

Then by the law of total expectation,
\begin{align*}
&\ \E_r\  \E[\RegT]\\
=&\ \Pr[\mathcal{E}]\ \E_r\ \E[\RegT \mid \mathcal{E}] + \Pr[\neg \mathcal{E}]\  \E_r\ \E[\RegT \mid \neg\mathcal{E}]\\
\ge&\ \min\big(\Pr[\mathcal{E}] , \Pr[\neg \mathcal{E}]\big) \min\left(\frac{LT-1}{2},\, \frac{T}{2}\right)\\
\in&\ \Omega(T)
\end{align*}
as required.
\end{proof}

\section{ALGORITHM AND MAIN RESULT}\label{sec:algorithm}

Following the negative results in \autoref{sec:virtues-limits}, we propose an algorithm (Algorithm~\ref{alg:main}) that operationalizes cautious learning: only learn in regions that are not too far OOD and where the available evidence does not already certify that committing is harmful. 

Informally, we define a trusted region around the origin whose radius grows with the time horizon, reflecting the maximum regret we are willing to tolerate---intuitively, this corresponds to allowing mistakes that are bad but not catastrophic. We then discretize the region into bins to exploit Lipschitz continuity. Within each bin, the commit reward cannot vary by more than a Lipschitz discretization error, so it suffices to estimate a single per-bin mean. The agent always abstains outside the trusted region, and inside it abstains in any bin whose pessimistic upper bound on reward is negative; otherwise it commits to gather information. 

More precisely,  the algorithm defines a ball of radius $m(T)$ around the origin, treating inputs outside this ball as too OOD to test. The ball is partitioned into $n$-dimensional hypercubes (bins) of side length $w(T)$. By Lipschitz continuity, the variation of $r(\cdot,1)$ within any bin $B$ is at most $L\sqrt{n} w(T)$. For each bin $B$, the algorithm maintains its empirical mean $\hat\mu_B$ and a confidence radius $\gamma(k)$ after $k$ commits in $B$. If $\hat\mu_B+\gamma(k)+L\sqrt{n} w(T)<0$, then $B$ is certified unsafe and the algorithm abstains there permanently. Figure~\ref{fig:schematic} shows a schematic of the algorithm. 

\begin{algorithm}[t]
\caption{Risk-Sensitive Abstention Algorithm}
\label{alg:main}
\begin{algorithmic}
\State Inputs: $m: \bbn \to \bbrspos,\: w:\bbn\to\bbrspos$
\State $\mathcal{H} \gets$ partition of $\X$ into $n$-cubes of side length $w(T)$
\State $\B \gets$ $\{B \in \mathcal{H}: \exists x \in B \text{ with }\norm{x} \le m(T)\}$
\State $\sigma_w \gets \sqrt{n L^2w(T)^2 + \sigma^2}$
\State $\gamma(k) := \infty$ if $k=0$ else $\sqrt{\frac{c^{-1}\sigma_w^2\ln(2T^4)}{k}}$ where $c$ is the absolute constant from Lemma~\ref{lem:concentrate}
\State $(k_B, \hat{\mu}_B) = (0,0)$ for all $B \in \B$
\For{$t=1, \dots, T$}
 \If{$\exists B \in \B$ s.t. $x_t \in B$}
 \State $\triangleright$ \emph{$x_t$ is not too OOD: it's safe to learn}
 \If{$\hat{\mu}_B + \gamma(k_B) + L\sqrt{n}w(T)< 0$}
  \State $\triangleright$ \emph{We already know $x_t$ is bad: don't learn}
    \State Abstain ($y_t = 0$)
 \Else
 \State $\triangleright$ \emph{$x_t$ might be good: learn}
  \State Commit ($y_t=1$)
  \State $k_B \gets k_B+1$
  \State $\hat{\mu}_B \gets \hat{\mu}_B + \frac{r_t - \hat{\mu}_B}{k_B}$
 \EndIf
 \Else
  \State $\triangleright$ \emph{$x_t$ is far OOD: it's too risky to learn}
  \State Abstain ($y_t = 0$)
 \EndIf
\EndFor
\end{algorithmic}
\end{algorithm}

\begin{figure}[t]
\centering
\begin{tikzpicture}[scale=1.0]
  \def\mT{2.8}   
  \def\wT{0.8}   
  \def\R{3.6}    

  \tikzstyle{gridline}=[gray!60, line width=0.3pt]
  \tikzstyle{ball}=[draw=blue!60, line width=1pt]
  \tikzstyle{trustedfill}=[fill=green!12]
  \tikzstyle{outerfill}=[fill=red!6]
  \tikzstyle{unsafe}=[pattern=north east lines, pattern color=red!70, draw=red!70, line width=0.5pt]
  \tikzstyle{label}=[font=\footnotesize]
  \tikzstyle{legendbox}=[draw=black!60, line width=0.4pt, fill=white, rounded corners=2pt, inner sep=2pt]

  \fill[outerfill] (-\R,-\R) rectangle (\R,\R);

  \foreach \i in {-3.2,-2.4,-1.6,-0.8,0,0.8,1.6,2.4,3.2}{
    \foreach \j in {-3.2,-2.4,-1.6,-0.8,0,0.8,1.6,2.4,3.2}{
      \pgfmathsetmacro{\xc}{max(\i, min(0, \i+\wT))}
      \pgfmathsetmacro{\yc}{max(\j, min(0, \j+\wT))}
      \pgfmathparse{ (sqrt(\xc*\xc + \yc*\yc) <= \mT) ? 1 : 0 }
      \ifnum\pgfmathresult>0
        \fill[trustedfill] (\i,\j) rectangle ++(\wT,\wT);
      \fi
    }
  }

  \foreach \x in {-3.2,-2.4,-1.6,-0.8,0,0.8,1.6,2.4,3.2}{
    \draw[gridline] (\x,-\R) -- (\x,\R);
  }
  \foreach \y in {-3.2,-2.4,-1.6,-0.8,0,0.8,1.6,2.4,3.2}{
    \draw[gridline] (-\R,\y) -- (\R,\y);
  }

  \draw[ball] (0,0) circle (\mT);

  \foreach \i/\j in { -0.8/0.8, 0.8/-0.8, 1.6/0 }{
    \path[unsafe] (\i,\j) rectangle ++(\wT,\wT);
  }

  \fill[black] (0,0) circle (3pt);
  \node[label] at (0.25,0.18) {$\mathbf{0}$};

  \draw[->, gray!95] (-\R,0) -- (\R,0) node[below right, label] {$x_1$};
  \draw[->, gray!95] (0,-\R) -- (0,\R) node[above left, label] {$x_2$};

  \draw[<->, label] (-3.2,-2.6) -- (-2.4,-2.6);
  \node[label] at (-2.8,-2.85) {$w(T)$};

  \begin{scope}[shift={(2.55, -2.1)}]
    \node[legendbox] (L) {
      \begin{tikzpicture}
        \draw[gridline] (0,0) rectangle (0.35,0.25);
        \node[anchor=west] at (0.45,0.12) {\footnotesize grid (bins)};
        \fill[trustedfill] (0,-0.35) rectangle (0.35,-0.1);
        \draw[gridline] (0,-0.35) rectangle (0.35,-0.1);
        \node[anchor=west] at (0.45,-0.22) {\footnotesize bin $\in \mathcal{B}$ (trusted)};
        \path[unsafe] (0,-0.7) rectangle (0.35,-0.45);
        \node[anchor=west] at (0.45,-0.57) {\footnotesize certified negative};
        \draw[ball] (0.18,-1.0) circle (0.12);
        \node[anchor=west] at (0.45,-1.0) {\footnotesize $\|x\|\le m(T)$};
      \end{tikzpicture}
    };
  \end{scope}

\end{tikzpicture}
\caption{Trusted region of radius $m(T)$ around the origin, partitioned into bins of side $w(T)$. Any square intersecting the ball is shown fully green (bin $\in\mathcal B$). Certified negative bins are shown hatched red. The agent abstains outside the ball.}
\label{fig:schematic}
\end{figure}

We saw in \autoref{sec:virtues-limits} that the problem is impossible when inputs are too far OOD. A natural way to quantify this is via the amount of probability mass that lies outside a given radius, captured by the \emph{radial survival function}:
\begin{definition}[Radial survival function]
For any radius $R\ge 0$, the radial survival function of $\nu$ is $\bar\nu(R) := \Pr_{x\sim\nu}\big[\|x\|\ge R\big]$.
\end{definition}

 We are now ready to state our main result.

\begin{restatable}{theorem}{thmMain}
\label{thm:main}
In the stochastic setting with $x_t\sim\nu$ i.i.d., Algorithm~\ref{alg:main} with $w(T)=T^{-1/(n+2)}$ and $m(T)=\ln T$ satisfies
$$
\E[\RegT] \in O\left((L +\sigma^2)  T^{\frac{n+1}{n+2}} (\ln T)^{ n+1} + T \bar\nu(\ln T)\right).$$
\end{restatable}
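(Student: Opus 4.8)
The plan is to split the regret into three sources and bound each separately: (i) rounds where $x_t$ falls outside the trusted ball of radius $m(T)=\ln T$; (ii) rounds inside the ball where the algorithm commits but the true commit reward is negative (over-committing in bins that should be certified unsafe but aren't yet); and (iii) rounds inside the ball where the algorithm abstains but committing would have been beneficial (under-committing in bins that got wrongly certified, or where we haven't yet committed enough to know). For (i), since the per-round regret is at most $\max(0, r(x_t,1)) - \min(0,r(x_t,1))$, and the benchmark's gain is bounded by $1$ while our loss is $0$ (we abstain), the contribution is at most $1 \cdot \E[\#\{t : \|x_t\| > m(T)\}] = T\,\bar\nu(m(T)) = T\,\bar\nu(\ln T)$; this is exactly the second term. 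Actually one must be slightly careful: when $x_t$ is outside the ball, the optimal action could be to commit with reward up to $1$, so abstaining costs at most $1$ per such round, giving $T\bar\nu(\ln T)$ — and when the optimal action outside is to abstain, we incur zero regret. Either way the bound holds.

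For the in-ball regret I would first set up the concentration machinery. The key quantity is $\sigma_w = \sqrt{nL^2w(T)^2+\sigma^2}$: within a bin the rewards $r_t = r(x_t,1)+\eta_t$ are not i.i.d.\ (the $r(x_t,1)$ vary by up to $L\sqrt n w(T)$ across the bin), but they are bounded-variation plus subgaussian noise, so $\hat\mu_B$ concentrates around the bin's \emph{average} true commit reward with the stated radius $\gamma(k)$ after $k$ commits. Using Lemma~\ref{lem:concentrate} and a union bound over all $\le T$ bins and all $\le T$ possible values of $k_B$ (hence the $\ln(2T^4)$ inside $\gamma$), I would establish a global good event $\mathcal{G}$ on which, for every bin $B$ and every time, $|\hat\mu_B - \bar\mu_B| \le \gamma(k_B)$, where $\bar\mu_B$ is the empirical-history-independent mean commit reward in $B$. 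On $\mathcal{G}$, the pessimistic index $\hat\mu_B+\gamma(k_B)+L\sqrt n w(T)$ is a genuine upper bound on $r(x,1)$ for every $x\in B$, so the algorithm never certifies a bin as unsafe when it actually contains a beneficial point — this controls source (iii) of the wrongful-abstention type: a bin is only permanently abandoned when truly (essentially) negative, costing at most the Lipschitz slack $L\sqrt n w(T)$ per round there. The complementary failure event $\neg\mathcal{G}$ has probability $\le 1/T^2$ (say), contributing $O(T \cdot 1/T^2) = o(1)$ to expected regret since per-round regret on the benchmark side is at most $1$ — wait, but our loss can be unbounded; here I need the reward function is fixed (not adversarial to the noise), so $\E[\RegT \mathbf{1}_{\neg\mathcal G}]$ requires a moment bound on $\sum_t |r(x_t,1)|$ restricted to in-ball rounds, which is at most $T(1+L m(T)) = T(1+L\ln T)$ deterministically since $\|x\|\le \ln T$ inside the ball — so this term is $O((1+L\ln T)T \cdot \Pr[\neg\mathcal G]) = o(1)$ with the chosen confidence level.

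The substantive part is source (ii): bounding the regret from commits in bins whose true mean is negative but not yet certified. Standard Lipschitz-bandit / elimination accounting applies per bin. Fix a bin $B$ with average true reward $\bar\mu_B$; let $\Delta_B = \max(0,-\bar\mu_B - L\sqrt n w(T))$ roughly (the ``suboptimality gap'' accounting for discretization). On $\mathcal{G}$, once $k_B$ is large enough that $2\gamma(k_B)+L\sqrt n w(T) \le |\bar\mu_B|$ — i.e., $k_B \gtrsim \sigma_w^2 \ln(T)/\Delta_B^2$ — the bin gets certified and we stop committing there. Summing the per-commit regret $\Delta_B \cdot k_B \lesssim \sigma_w^2\ln T/\Delta_B$ over bins, and using the standard trick of splitting bins by whether $\Delta_B$ is above or below a threshold $\epsilon$ (small-gap bins contribute $\le \epsilon \cdot (\text{total commits}) \le \epsilon T$; large-gap bins contribute $\lesssim |\B| \sigma_w^2 \ln T/\epsilon$), and optimizing $\epsilon$, gives a bound of order $\sqrt{|\B|\,\sigma_w^2\,T\ln T}$. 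Now $|\B| \lesssim (m(T)/w(T))^n = (\ln T \cdot T^{1/(n+2)})^n$, and $\sigma_w^2 \lesssim nL^2 w(T)^2 + \sigma^2 \lesssim L^2 + \sigma^2$ (since $w(T)\le 1$); plus the always-present Lipschitz-slack regret of $L\sqrt n w(T)$ per in-ball round, $\le L\sqrt n w(T)\cdot T = L\sqrt n T^{(n+1)/(n+2)}$. Plugging in $w(T)=T^{-1/(n+2)}$, the dominant $\sqrt{|\B|\sigma_w^2 T\ln T}$ term becomes $O\!\left(\sqrt{(L^2+\sigma^2)(\ln T)^n T^{n/(n+2)} \cdot T \ln T}\right) = O\!\left((L+\sigma)\,T^{(n+1)/(n+2)} (\ln T)^{(n+1)/2}\right)$, which is absorbed (with room to spare on the log exponent) into $O\!\left((L+\sigma^2)T^{(n+1)/(n+2)}(\ln T)^{n+1}\right)$. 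The main obstacle I anticipate is handling the non-i.i.d.\ within-bin rewards cleanly in the concentration step and correctly defining the per-bin ``gap'' $\Delta_B$ so that the elimination argument goes through despite the Lipschitz bias term $L\sqrt n w(T)$ — in particular making sure the $+L\sqrt n w(T)$ in the certification rule is exactly the right correction so that no beneficial bin is ever killed, while still guaranteeing bad bins are killed after the expected number of pulls.
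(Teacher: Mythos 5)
Your proposal is correct and follows the paper's architecture almost exactly: the same three-way decomposition (tail rounds outside the ball bounded by $T\bar\nu(\ln T)$; commits inside the trusted region; a low-probability bad event whose in-ball losses are bounded by $T(1+LR(T))\Pr[\neg\G]=o(1)$), the same good event $\G$ built from per-bin subgaussian concentration with radius $\gamma(k)$ and a union bound over $O(T^2)$ bin--count pairs (\Cref{lem:per-bin-concentration,lem:uniform_concentration}), the same certification count $k_B\gtrsim \sigma_w^2\ln T/(\mu_B+L\sqrt n\,w(T))^2$ (\Cref{lem:samples_for_cert}), and the same observation that on $\G$ a certified bin has $r(x,1)<0$ throughout, so wrongful abstention inside the ball costs nothing. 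The one genuine difference is how you aggregate the commit regret over bins: the paper (\Cref{lem:regret-per-bin,lem:total-commit-regret}) gives every decisively negative bin a uniform worst-case budget $2LR(T)+O(\sigma_w^2\ln T/w(T))$ --- using $|\mu_B|\ge w(T)$ to cap the $1/|\mu_B|$ factor and $\mu_B\ge -LR(T)$ from \Cref{lem:ball} --- and multiplies by the packing number $v_1R(T)^n/w(T)^n$, with near-margin bins paying $(3L\sqrt n+1)w(T)$ per round; you instead run the classical gap-threshold optimization, splitting bins at a level $\epsilon$ and balancing $\epsilon T$ against $|\B|\sigma_w^2\ln T/\epsilon$ to get $\sqrt{|\B|\sigma_w^2 T\ln T}$. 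Both land inside the stated bound, and your route is in fact slightly sharper in the log exponent ($(\ln T)^{(n+1)/2}$ versus the paper's $(\ln T)^{n+1}$ on the variance term), at the cost of needing one extra bookkeeping step you glossed over: each large-gap bin also pays up to $O(\Delta_B)\le O(LR(T))$ for the rounding/ceiling in the certification count, contributing $O(|\B|\,LR(T))=O(L(\ln T)^{n+1}T^{n/(n+2)})$ in total --- exactly the paper's $2Lv_1R(T)^{n+1}/w(T)^n$ term, which is lower order, so no harm. The only substantive point you flag but do not resolve --- that within-bin samples concentrate around $\mu_B$ despite adaptive stopping --- is handled in the paper by noting that the algorithm's action depends on $x_t$ only through its bin, so conditional on $x_t\in B$ the committed inputs are i.i.d.\ $\nu|_B$ and independent of the noise (a reward-tape argument); adding that observation would close your sketch.
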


The first term is typical for Lipschitz contextual bandits and reflects the curse of dimensionality (see, e.g., \citealp[Thms.~4.11--4.12]{slivkins2019introduction};~\citealp[Thm.~10]{plaut2025safe}).

The $T\bar{\nu}(\ln T)$ term is unusual mainly because unbounded domains are unusual: for bounded domains, $\bar{\nu}(\ln T)=0$ for all large $T$. Our analysis deals with far OOD inputs directly and the bound necessarily degrades as such inputs become more frequent. This dependence is unavoidable: the construction in \Cref{thm:neg-adv} sets $x_t=T$ for all $t\in[T]$, hence $\bar{\nu}(\ln T)=1$ and the bound in \Cref{thm:main} becomes linear, matching the impossibility result. By contrast, for any fixed distribution $\nu$, we have $\bar{\nu}(\ln T)\to 0$ as $T\to\infty$, so $T\bar{\nu}(\ln T)=o(T)$ and the overall regret stays sublinear. 

For example, if $\nu$ is subgaussian with $\bar\nu(r)\le e^{-c r^2}$, then $T\bar\nu(\ln T)\le T e^{-c(\ln T)^2}=o(1)$. If $\nu$ is subexponential with $\bar\nu(r)\le e^{-c r}$, then $T\bar\nu(\ln T)\le T\cdot T^{-c}=T^{1-c}=o(T)$. If $\nu$ has polynomial tails with $\bar\nu(r)\asymp r^{-\alpha}$ for $\alpha>0$, then $T\bar\nu(\ln T)\asymp T/(\ln T)^\alpha=o(T)$. If one has prior knowledge of $\nu$, the choice of $m(T)$ can be tailored more precisely than our generic setting $m(T)=\ln T$. In particular, for polynomial tails, setting $m(T)=T^c$ for small $c>0$ improves the bound to $O(T^{1-c\alpha})$. 

Thus, the regret decomposes into a geometric/statistical term from discretization and concentration inside the trusted region, and a tail term from far OOD inputs; both are sublinear for any fixed $\nu$.

\section{PROOF SKETCH}\label{sec:roadmap}

We now outline the logical structure of the proof of~\Cref{thm:main}; we also provide the intuition behind each step. Full technical details and complete proofs of all lemmas are deferred to~\autoref{app:proofs}.

Let $m(T),w(T)$ be as in Algorithm~\ref{alg:main}, and let $\B$ be the set of bins intersecting the ball of radius $m(T)$. For any $B\in\B$, let
$\mu_B=\E_{x\sim\nu}[\,r(x,1)\mid x\in B\,]$ be its true mean commit reward, let $k_B(t)$ be the number of commits taken in $B$ by the end of round $t$ (so $k_B(0)=0$), and let $\hat\mu_B(k)$ be the empirical mean in $B$ after $k$ commits (i.e., the running mean from Algorithm~\ref{alg:main} indexed by its commit count).

To control estimation error we define the confidence radius
$$\gamma(k)=\sqrt{\frac{c^{-1}\sigma_w^2\ln(2T^4)}{k}},
\qquad 
\sigma_w^2 = nL^2 w(T)^2 + \sigma^2,
$$
where $c>0$ is the absolute constant from Lemma~\ref{lem:concentrate}. Here $\sigma_w^2$ aggregates the combines observation noise $\sigma^2$ with the Lipschitz-induced within-bin variation of $(L\sqrt{n}w(T))^2$.

\begin{definition}[$\sigma$-subgaussian]\label{def:subgaussian}
A random variable $Z$ is $\sigma$-subgaussian if
$$\E[\exp(\lambda(Z-\E[Z]))] \;\le\; \exp\!\left(\tfrac{\sigma^2\lambda^2}{2}\right)
\quad\text{for all }\lambda\in\bbr.$$
Equivalently, $Z-\E[Z]$ has tails that are dominated by a centered Gaussian with variance proxy $\sigma^2$.
\end{definition}
Define the ``good'' event under which all per-bin estimates are accurate over the realized commit counts: 
\begin{align*}
\G = \Big\{& \forall B\in\B,\ \forall t\in[T] \text{ s.t. } k_B(t) >0:\\
& |\hat\mu_B(k_B(t))-\mu_B| \le \gamma(k_B(t)) \Big\}.
\end{align*}
On $\G$, each empirical mean is a reliable proxy for its bin’s true mean at every commit count that occurs along the algorithm’s trajectory. The analysis conditions on $\G$ (which holds with high probability by a union bound over realized bin–count pairs), and then decomposes regret into: (i) commits inside the trusted region (handled by certification of negative bins plus a margin term), and (ii) abstentions outside the ball of radius $m(T)$ (quantified by the radial survival function).

\begin{restatable}[Per-bin concentration]{lemma}{lemPerBinConcentration}\label{lem:per-bin-concentration}
For any bin $B \in \B$ and time step $t \in [T]$ such that $k_B(t) > 0$, $\Pr\left[|\hmub(k_B(t)) - \mu_B| > \gamma(k_B(t))\right] \le T^{-3}$.
\end{restatable}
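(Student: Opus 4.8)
The plan is to write $\hat\mu_B(k)-\mu_B$ as the running average of a martingale difference sequence and then invoke the concentration inequality of Lemma~\ref{lem:concentrate}, the definition of $\gamma(k)$ having been reverse-engineered to make the resulting tail $T^{-4}$. First fix $B\in\B$; we may assume $\Pr_{x\sim\nu}[x\in B]>0$, since otherwise $B$ is never visited, $k_B(t)\equiv 0$, and the claim is vacuous (with the convention $\gamma(0)=\infty$). Let $t_1<t_2<\cdots$ be the rounds on which the algorithm commits in $B$, and set $Z_j:=r_{t_j}-\mu_B=(r(x_{t_j},1)-\mu_B)+\eta_{t_j}$, so that $\hat\mu_B(k)-\mu_B=\tfrac1k\sum_{j=1}^k Z_j$. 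Writing $\mathcal{F}_j$ for the $\sigma$-algebra generated by the history through round $t_j$, the key structural claim is that $(Z_j)$ is a martingale difference sequence adapted to $(\mathcal{F}_j)$ with each increment conditionally $\sigma_w$-sub-Gaussian. To see this, observe that the algorithm commits at round $t$ exactly when the bin containing $x_t$ is both in $\B$ and not yet certified negative; the first condition depends only on $x_t$ and the second only on the past, so conditional on $\mathcal{F}_{j-1}$ and on $t_j$ being the $j$-th commit in $B$, the law of $x_{t_j}$ is exactly $\nu$ restricted to $B$ (using that the inputs are i.i.d.\ and independent of the past and of the noise). Hence $\E[r(x_{t_j},1)\mid\mathcal{F}_{j-1}]=\mu_B$, and $\E[\eta_{t_j}\mid\mathcal{F}_{j-1}]=0$ since the noise is independent and zero-mean, so $\E[Z_j\mid\mathcal{F}_{j-1}]=0$. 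For the variance proxy: $r(\cdot,1)$ is $L$-Lipschitz and $\diam(B)=\sqrt n\,w(T)$, so $r(x_{t_j},1)-\mu_B$ lies in an interval of width at most $L\sqrt n\,w(T)$ and is therefore $(L\sqrt n\,w(T))$-sub-Gaussian by Hoeffding's lemma; combining with the independent $\sigma$-sub-Gaussian noise gives a $\sqrt{nL^2w(T)^2+\sigma^2}=\sigma_w$-sub-Gaussian increment.

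Next I would apply Lemma~\ref{lem:concentrate} to the partial sums $S_k=\sum_{j=1}^k Z_j$. For each fixed $k$ this yields a two-sided tail of the form $\Pr[\,|S_k|>a\,]\le 2\exp(-c\,a^2/(k\sigma_w^2))$; taking $a=k\gamma(k)$ and substituting $\gamma(k)^2=c^{-1}\sigma_w^2\ln(2T^4)/k$ makes the exponent equal $\ln(2T^4)$, so $\Pr[\,|\hat\mu_B(k)-\mu_B|>\gamma(k)\,]\le 2/(2T^4)=T^{-4}$ — this is precisely the role of the constant $\ln(2T^4)$ in the confidence radius. It then remains to pass from a fixed $k$ to the random, data-dependent count $k_B(t)\in\{0,1,\dots,t\}$. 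The subtlety is that $k_B(t)$ is \emph{not} a stopping time for $(\mathcal{F}_j)$ — determining that no commit in $B$ occurred between rounds $t_j$ and $t$ requires looking at inputs beyond round $t_j$ — so one cannot simply substitute a stopped index into the fixed-$k$ bound. I would resolve this either by a union bound over the possible values of $k_B(t)$, or, if Lemma~\ref{lem:concentrate} is stated in its uniform-in-$k$ (anytime) form, by invoking it once so that $|\hat\mu_B(k)-\mu_B|\le\gamma(k)$ holds simultaneously for all $k\le T$ with the claimed probability; in either case the $k=0$ case is trivial since then $\hat\mu_B=0$ and $\gamma(0)=\infty$.

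The step I expect to be the main obstacle is the conditional sub-Gaussianity argument in the first paragraph: making rigorous that conditioning on ``round $t_j$ produced the $j$-th commit in $B$'' does not bias the within-bin input law beyond restricting $\nu$ to $B$, so that $\mu_B$ genuinely is the conditional mean of every increment and the Lipschitz width bound $L\sqrt n\,w(T)$ legitimately applies to the centered reward. Honest handling of the random index $k_B(t)$ — as opposed to silently treating it as a fixed quantity — is the secondary subtlety. Everything downstream (the variance-proxy bookkeeping $\sigma_w^2=nL^2w(T)^2+\sigma^2$ and the algebra turning $\gamma(k)$ into a $T^{-4}$ tail) is routine once the martingale structure is in place.
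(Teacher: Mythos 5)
Your proposal follows essentially the same route as the paper: the same bin-symmetry observation defuses selection bias (the commit decision depends only on which bin $x_t$ lies in and on the past, never on the position within the bin, so each within-bin input is distributed as $\nu$ restricted to $B$), the same use of Lemma~\ref{lem:bin-bounds} plus Hoeffding's lemma gives the $(L\sqrt{n}\,w(T))$-subgaussian within-bin deviation, and the same substitution $\ep=k\gamma(k)$ into Lemma~\ref{lem:concentrate} produces the $T^{-4}$ tail. Two technical points differ. First, you package the increments as a martingale difference sequence and then invoke Lemma~\ref{lem:concentrate}; as stated, that lemma requires \emph{independent} summands, so your framing actually needs an Azuma--Hoeffding-type variant. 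The paper avoids this by upgrading your conditional claim to genuine independence: because the algorithm treats all points of a bin identically, the centered rewards $Z_1,\dots,Z_k$ are i.i.d.\ with mean zero, and a ``reward tape'' construction shows the noise terms $\eta_{t_1},\dots,\eta_{t_k}$ are independent of them, so the stated lemma applies verbatim to the $2k$ variables. Second, your resolution of the random count $k_B(t)$ by a union bound over $k\in\{0,\dots,t\}$ yields a tail of order $T^{-3}$, not the $T^{-4}$ claimed in the lemma (and the anytime form of Lemma~\ref{lem:concentrate} you invoke as an alternative is not what the paper supplies); the paper instead proves the bound at each fixed commit count and defers all uniformity---over bins, times, and hence realized counts---to the union bound in Lemma~\ref{lem:uniform_concentration}. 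Neither issue is fatal: even with the factor-$T$ slack one still gets $\Pr[\neg\G]\le T^{-1}$, which suffices for Theorem~\ref{thm:main}, but as written your argument does not literally deliver the stated $T^{-4}$ constant, and the appeal to Lemma~\ref{lem:concentrate} for dependent increments should be replaced either by the paper's independence argument or by an explicit martingale concentration bound.
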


\begin{proofidea}
In Algorithm~\ref{alg:main}, whether we commit depends only on the history (and the bin of the current context). Thus we can use a ``reward tape'' argument: imagine pre-sampling an infinite list of rewards for each bin and revealing the next one whenever the algorithm commits in that bin. Then the sequence of committed rewards within any fixed bin $B$ is distributionally equivalent to an i.i.d. sequence with $x_i \sim \nu(\cdot \mid x \in B)$ and independent noise.

\begin{align*}
\hat\mu_B(k)-\mu_B =&\ \frac{1}{k} \sum_{i=1}^{k}\big(r(x_{i},1)-\mu_B\big) + \frac{1}{k}\sum_{i=1}^{k} \eta_{i},
\end{align*}
where the first term captures within-bin reward variation and the second term is $\sigma$-subgaussian observation noise. Since all $x_i \in B$, \Cref{lem:bin-bounds} gives $|r(x_i,1)-\mu_B| \le L\sqrt{n}w(T)$, so these terms are $(L\sqrt{n}w(T))$-subgaussian; together with the noise, the sum is $\sigma_w$-subgaussian. A standard tail bound yields failure probability at most $T^{-4}$ for fixed $k$. Finally, union bound over $k \in \{1,\dots,T\}$ to handle the random count $k_B(t)$, giving $T\cdot T^{-4}=T^{-3}$.
\end{proofidea}



\Cref{lem:per-bin-concentration} gave concentration for each fixed bin and commit count. To extend this guarantee uniformly, we apply a union bound over the at most $T$ bin–time pairs actually realized by the algorithm.

\begin{restatable}[Uniform concentration bound]{lemma}{lemUniformConcentration}
\label{lem:uniform_concentration}
With probability at least $1-1/T$, the good event $\G$ holds. Equivalently,
$\Pr(\neg\G)\le 1/T $.
\end{restatable}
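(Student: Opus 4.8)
The plan is to bootstrap the pointwise bound of \Cref{lem:per-bin-concentration} into the uniform statement $\G$ via a union bound, exploiting the structural fact that Algorithm~\ref{alg:main} commits at most once per round and hence at most $T$ times overall. Consequently, although $\B$ may contain many bins, the algorithm only ever visits at most $T$ distinct (bin, commit-count) states, and it is exactly over these states that concentration needs to hold.

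First I would reduce $\G$ to a statement about (bin, count) pairs. For a fixed $B\in\B$, both $\hat\mu_B(k_B(t))$ and $\gamma(k_B(t))$ depend on $t$ only through the commit count $k_B(t)$, which is nondecreasing in $t$ and ranges over $\{0,1,\dots,k_B(T)\}$. The count-$0$ case is vacuous: $\hat\mu_B$ is uninitialized, $\gamma(0)=+\infty$, and such a bin is never certified, so $|\hat\mu_B(0)-\mu_B|\le\gamma(0)$ holds trivially. Thus $\neg\G$ occurs iff there exist $B\in\B$ and an index $1\le j\le k_B(T)$ such that the empirical mean $\hat\mu_B(j)$ of the first $j$ commits in $B$ satisfies $|\hat\mu_B(j)-\mu_B|>\gamma(j)$. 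For each $B\in\B$ and $j\ge 1$, let $A_{B,j}$ be the event that $B$ receives at least $j$ commits and $|\hat\mu_B(j)-\mu_B|>\gamma(j)$; note $A_{B,j}=\emptyset$ whenever $j>T$. Then $\neg\G\subseteq\bigcup_{B\in\B}\bigcup_{j\ge 1}A_{B,j}$.

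The proof of \Cref{lem:per-bin-concentration} conditions on the realized commit times in $B$ and controls the resulting average of $j$ independent subgaussian terms (within-bin Lipschitz variation plus observation noise); the same argument shows $\Pr(A_{B,j}\mid k_B(T)\ge j)\le T^{-4}$, hence $\Pr(A_{B,j})\le T^{-4}\,\Pr(k_B(T)\ge j)$. Summing over the deterministic index set gives
$$\Pr(\neg\G)\;\le\;T^{-4}\sum_{B\in\B}\sum_{j\ge 1}\Pr\big(k_B(T)\ge j\big)\;=\;T^{-4}\sum_{B\in\B}\E\big[k_B(T)\big]\;\le\;T^{-3}\;\le\;T^{-2},$$
where the penultimate inequality uses that each round produces at most one commit, so $\sum_{B\in\B}k_B(T)\le T$ surely. (Alternatively, one may simply bound $|\B|=O\big((T^{1/(n+2)}\ln T)^n\big)$ and union-bound over all $|\B|\cdot T$ candidate pairs; since $n/(n+2)<1$, this also yields $\Pr(\neg\G)\le T^{-2}$ for all sufficiently large $T$, which suffices for the asymptotic regret bound.)

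I expect the only real subtlety --- the main obstacle --- to be handling the randomness of the set of realized (bin, count) pairs: because which bins are touched, and how often, is determined adaptively from past rewards, a naive union bound cannot range over ``the realized pairs'' directly, and one instead pays for a deterministic superset while absorbing the randomness through conditioning on $\{k_B(T)\ge j\}$ together with the surely-true budget $\sum_{B\in\B} k_B(T)\le T$. One should also verify that this adaptivity does not corrupt the per-bin argument: conditioned on $x_{t_j}\in B$, the event that $B$ has not yet been certified unsafe depends only on the history strictly before round $t_j$, so $x_{t_j}$ is still distributed as $\nu$ restricted to $B$ and the noise terms $\eta_{t_j}$ remain independent and zero-mean --- which is precisely what \Cref{lem:per-bin-concentration} already relies on.
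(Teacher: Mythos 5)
Your main chain of inequalities has a genuine gap at the step $\Pr(A_{B,j}\mid k_B(T)\ge j)\le T^{-4}$. The event $\{k_B(T)\ge j\}$ is not independent of the first $j$ rewards observed in $B$: whether the bin survives to receive a $j$-th commit depends, through the certification rule $\hat\mu_B+\gamma(k_B)+L\sqrt{n}w(T)<0$, on the first $j-1$ rewards. So "the same argument" as \Cref{lem:per-bin-concentration} does not transfer to this conditional statement, and the inequality can fail badly. Concretely, take a bin with $\mu_B$ extremely negative: to receive a second commit at all, the first empirical mean must satisfy $\hat\mu_B(1)\ge-\gamma(1)-L\sqrt{n}w(T)$, i.e., it must already lie far above $\mu_B$; conditional on that survival event, $|\hat\mu_B(2)-\mu_B|>\gamma(2)$ holds with probability close to $1$, not $T^{-4}$. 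What is true (via the reward-tape coupling you invoke at the end, which pre-draws the $j$-th reward of each bin) is only the unconditional bound $\Pr(A_{B,j})\le T^{-4}$, since $A_{B,j}$ is contained in the tape-deviation event for $j$ samples. But with that weaker bound the elegant accounting $\sum_{B}\sum_{j}\Pr(k_B(T)\ge j)=\sum_B\E[k_B(T)]\le T$ is no longer usable, so your claimed $T^{-3}$ does not follow; the lemma itself is not in doubt, but this derivation is.

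Your parenthetical fallback is essentially the correct (and the paper's) route: union over a deterministic superset of pairs, each controlled by the per-bin concentration bound. The difference is in how the superset is chosen. You union over all $|\B|\cdot T$ pairs and use $|\B|=O\big((T^{1/(n+2)}\ln T)^n\big)=o(T)$, which yields $\Pr(\neg\G)\le T^{-2}$ only for sufficiently large $T$ (adequate for the asymptotic regret bound, but weaker than the stated lemma). The paper instead restricts attention to the bins that actually receive at least one commit --- there are at most $T$ of these, surely, since each round produces at most one commit --- and unions over those bins and all $t\in[T]$, giving at most $T^2$ events of probability $T^{-4}$ each and hence $T^{-2}$ for every $T$, with no dependence on $|\B|$. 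If you want to keep your count-indexed formulation, replace the conditional step by this cap on the number of committed-in bins (or simply by $j\le T$ together with the bins-with-a-commit restriction), rather than trying to factor out $\Pr(k_B(T)\ge j)$.
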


\begin{proofidea}
    There are at most $T^2$ relevant pairs $(B,t)$ along the trajectory of the algorithm. Each has failure probability $T^{-3}$ by \Cref{lem:per-bin-concentration}. A union bound yields failure probability at most $T^{-1}$.
\end{proofidea} 

Recall that the algorithm abstains permanently in any bin once $\hat\mu_B(k_B(t))+\gamma(k_B(t))+L\sqrt{n} w(T)<0$. On the good event $\G$, bins with sufficiently negative mean are thus certified unsafe after finitely many commits. (For bins near the decision boundary, certification may not occur, but their per-round regret is $O(L\sqrt{n} w(T))$, so their total contribution is small and accounted for by the margin term later.)
We now compute how many commits are needed to certify a negative bin.

\begin{restatable}[Samples for negative certification]{lemma}{lemSamplesForCert}\label{lem:samples_for_cert}
Consider any $t \in [T]$ and $B \in \B$. On $\G$, if $\mu_B < - L\sqrt{n}w(T)$ and $k_B(t) > \frac{4c^{-1}\sigma_w^2 \ln(2T^4)} {(\mu_B + L\sqrt{n}w(T))^2}$, then bin $B$ is certified negative at time $t$.
\end{restatable}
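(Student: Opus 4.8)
The plan is to simply unwind the certification rule and substitute the good-event estimate. Recall that bin $B$ is certified negative at round $t$ exactly when $\hat\mu_B(k_B(t))+\gamma(k_B(t))+L\sqrt{n}w(T)<0$, at which point Algorithm~\ref{alg:main} abstains in $B$ thereafter. On $\G$ we have $\hat\mu_B(k)-\mu_B\le\gamma(k)$ for every realized count $k$, so $\hat\mu_B(k_B(t))+\gamma(k_B(t))+L\sqrt{n}w(T)\le \mu_B+2\gamma(k_B(t))+L\sqrt{n}w(T)$, and it therefore suffices to show this looser quantity is already negative.

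The next step is to turn this into a bound on $\gamma(k_B(t))$ and then invert the definition of $\gamma$. Since $\mu_B+L\sqrt{n}w(T)<0$ by hypothesis, the quantity $\mu_B+2\gamma(k_B(t))+L\sqrt{n}w(T)$ is negative precisely when $2\gamma(k_B(t))<-(\mu_B+L\sqrt{n}w(T))=|\mu_B+L\sqrt{n}w(T)|$, i.e.\ when $\gamma(k_B(t))^2<\tfrac14(\mu_B+L\sqrt{n}w(T))^2$. Substituting $\gamma(k)^2=c^{-1}\sigma_w^2\ln(2T^4)/k$ and rearranging, this is equivalent to $k_B(t)>4c^{-1}\sigma_w^2\ln(2T^4)/(\mu_B+L\sqrt{n}w(T))^2$, which is exactly the assumed lower bound on $k_B(t)$ (note this threshold is finite and positive since $\sigma_w^2,\ln(2T^4)>0$ and $(\mu_B+L\sqrt{n}w(T))^2>0$). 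Hence on $\G$ the certification inequality holds at round $t$, which gives the claim under the reading ``the certification test currently fires.''

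The only point requiring slight care is the operational reading of ``certified negative at time $t$'': the algorithm freezes $k_B$ once certification first occurs, so if $k_B(t)$ has risen above the threshold, certification must already have been triggered at the earliest round $s\le t$ where the count crossed it, and $B$ stays abstained from then on. Since $\gamma$ is monotonically decreasing in $k$, the same computation applied with $k_B(s)$ (also above the threshold) shows the test fires at round $s$, so $B$ is certified negative by time $t$ in either sense. I do not anticipate a genuine obstacle here: the lemma is essentially the inversion of the definition of $\gamma$, and the work is just tracking the direction of the inequalities and this short monotonicity observation.
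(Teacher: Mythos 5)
Your proof is correct and follows essentially the same route as the paper: on $\G$ replace $\hat\mu_B(k_B(t))$ by $\mu_B+\gamma(k_B(t))$, then invert the definition of $\gamma$ to show the count hypothesis forces $2\gamma(k_B(t))<-(\mu_B+L\sqrt{n}w(T))$, hence the certification inequality. Your closing digression about earlier rounds and monotonicity of $\gamma$ is unnecessary, since the paper defines ``certified negative at time $t$'' exactly as the test evaluated at the current count $k_B(t)$, which your main computation already establishes.
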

\begin{proofidea}
On the good event $\G$, certification in bin $B$ occurs when $\hat\mu_B+\gamma(k_B(t))+L\sqrt{n} w(T)<0$.
Using the worst-case deviation $\hat\mu_B=\mu_B+\gamma(k_B(t))$, this reduces to
$\mu_B+2\gamma(k_B(t))+L\sqrt{n} w(T)<0$.
Plugging in the definition of $\gamma(k_B(t))$ and solving for $k$ yields the stated number of commits needed for certification.
\end{proofidea}

Next, we bound the geometry of the trusted region. By construction, $\B$ consists of all bins intersecting the ball of radius $m(T)$. Consequently, their union $\bigcup_{B\in\B}B$ is contained within a slightly larger ball. This enlarged region will be useful both for bounding how negative rewards can be (via Lipschitz continuity) and for controlling the number of bins (via volume packing). 

Let $v_1$ be the volume of the unit ball $\{x \in \X : \|x\| \le 1\}$. Let $R(T) = m(T) + \sqrt{n} w(T)$, which is the maximum distance from the origin to any point in the trusted region, as shown by the following lemma:

\begin{restatable}[Trusted region is a slightly larger ball]{lemma}{lemBall}\label{lem:ball}
Every $x\in\bigcup_{B\in\B}B$ satisfies $\|x\|\le R(T)=m(T)+\sqrt{n} w(T)$.
\end{restatable}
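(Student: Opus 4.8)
The statement is a purely geometric fact about how the union of grid cells that touch a ball compares to a slightly enlarged ball, so the plan is a short direct argument. Let $x \in \bigcup_{B \in \B} B$. Then $x$ lies in some bin $B \in \B$, and by definition of $\B$ there exists a point $y \in B$ with $\|y\| \le m(T)$. Since $B$ is an $n$-cube of side length $w(T)$, any two points of $B$ differ by at most the diameter of the cube, which is $\sqrt{n}\,w(T)$ (the length of the main diagonal). Hence $\|x - y\| \le \sqrt{n}\,w(T)$, and by the triangle inequality
\[
\|x\| \le \|y\| + \|x - y\| \le m(T) + \sqrt{n}\,w(T) = R(T),
\]
which is exactly the claim.

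The only point requiring a word of justification is the diameter bound $\diam(B) = \sqrt{n}\,w(T)$: for an axis-aligned cube $\prod_{i=1}^n [a_i, a_i + w(T)]$, the squared distance between any two points $u, v$ is $\sum_{i=1}^n (u_i - v_i)^2 \le \sum_{i=1}^n w(T)^2 = n\,w(T)^2$, with equality at opposite corners, so $\|u - v\| \le \sqrt{n}\,w(T)$. This is the same Lipschitz-discretization constant already used for the within-bin reward variation elsewhere in the analysis, so it is internally consistent.

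I do not anticipate any real obstacle here — the lemma is elementary and the argument is two lines plus the diameter computation. The one thing to be careful about is matching conventions: one must confirm that "bin of side length $w(T)$" in Algorithm~\ref{alg:main} means a half-open axis-aligned hypercube $\prod_i [a_i, a_i + w(T))$ (or its closure), so that the diameter is $\sqrt{n}\,w(T)$ rather than, say, $2\sqrt{n}\,w(T)$ under a radius-style parametrization; the definition of $\mathcal{H}$ as "$n$-cubes of side length $w(T)$" fixes this. Given that, the bound $R(T) = m(T) + \sqrt{n}\,w(T)$ is tight (achieved when $y$ sits on the sphere $\|y\| = m(T)$ at a corner of $B$ and $x$ is the diametrically opposite corner, aligned radially), so no improvement of the constant is possible and none is needed.
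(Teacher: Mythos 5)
Your proof is correct and follows essentially the same argument as the paper: pick a witness point of the bin inside the ball of radius $m(T)$, bound the within-cube distance by the diagonal $\sqrt{n}\,w(T)$, and apply the triangle inequality. The extra diameter computation is a fine (if unnecessary) addition; no issues.
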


We now bound the regret from a truly unsafe bin before it is certified. Let $\Delta_t:=\max_{y\in\{0,1\}}r(x_t,y)-r(x_t,y_t)$ be the instantaneous regret at time $t$.

\begin{restatable}[Per-bin commit regret]{lemma}{lemRegretPerBin}\label{lem:regret-per-bin}
On $\G$, for any $B\in\B$ with $k_B(T)\ge 1$ and $\mu_B<-(2L\sqrt{n}+1)w(T)$,
\[
\sum_{t:\ x_t\in B,\ y_t=1}\Delta_t
\le 2L R(T)+\frac{32c^{-1}\sigma_w^2\ln(2T^4)}{w(T)}.
\]
\end{restatable}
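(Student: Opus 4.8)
The plan is to combine three ingredients, all used on the good event $\G$: (i) every commit in a bin $B$ with $\mu_B<-(2L\sqrt{n}+1)w(T)$ is strictly harmful, so its instantaneous regret is exactly $\Delta_t=-r(x_t,1)$; (ii) this regret admits two complementary upper bounds — a ``local'' bound of order $|\mu_B|$ from the within-bin Lipschitz bound, and a ``global'' bound $LR(T)$ from Lipschitzness relative to the origin; and (iii) such a bin is certified negative after only $O(N^*)$ commits, where $N^*$ is the threshold appearing in \Cref{lem:samples_for_cert}, after which Algorithm~\ref{alg:main} never commits in $B$ again.

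First I would establish (i) and (ii). By the within-bin Lipschitz bound $|r(x,1)-\mu_B|\le L\sqrt{n}w(T)$ for $x\in B$ (\Cref{lem:bin-bounds}) and the hypothesis $\mu_B<-(2L\sqrt{n}+1)w(T)$, any $x_t\in B$ has $r(x_t,1)\le \mu_B+L\sqrt{n}w(T)<-(L\sqrt{n}+1)w(T)<0$; hence abstaining is optimal at $x_t$ and $\Delta_t=-r(x_t,1)$. The same within-bin bound gives the local estimate $\Delta_t\le -\mu_B+L\sqrt{n}w(T)=|\mu_B|+L\sqrt{n}w(T)$, while $r(x_t,1)\ge r(\mathbf{0},1)-L\|x_t\|>-L\|x_t\|$ (using $r(\mathbf{0},1)>0$) together with $\|x_t\|\le R(T)$ (\Cref{lem:ball}, since $x_t\in B\subseteq\bigcup_{B'\in\B}B'$) gives the global estimate $\Delta_t\le L\|x_t\|\le LR(T)$.

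Next I would bound the number of commits in $B$. Since $\mu_B<-(2L\sqrt{n}+1)w(T)<-L\sqrt{n}w(T)$, \Cref{lem:samples_for_cert} applies with $N^*:=4c^{-1}\sigma_w^2\ln(2T^4)/(\mu_B+L\sqrt{n}w(T))^2$: on $\G$, once the commit count in $B$ exceeds $N^*$ the bin is certified negative, and thereafter the frozen statistics keep the certification test satisfied, so the agent abstains in $B$ forever. The count $k_B$ is incremented only on a commit and is tested immediately before each commit, so the $j$-th commit can occur only if $j-1\le N^*$; hence $k_B(T)\le N^*+1$. Bounding the last commit by the global estimate and the remaining $k_B(T)-1\le N^*$ commits by the local estimate,
$$\sum_{t:\,x_t\in B,\,y_t=1}\Delta_t\;\le\;N^*\bigl(|\mu_B|+L\sqrt{n}w(T)\bigr)+LR(T).$$

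Finally I would simplify using the margin $\delta:=-\mu_B-L\sqrt{n}w(T)$, which by hypothesis satisfies $\delta>(L\sqrt{n}+1)w(T)>\max\{w(T),\,L\sqrt{n}w(T)\}$. Then $|\mu_B|+L\sqrt{n}w(T)=\delta+2L\sqrt{n}w(T)\le 3\delta$ and $N^*=4c^{-1}\sigma_w^2\ln(2T^4)/\delta^2$, so $N^*(|\mu_B|+L\sqrt{n}w(T))\le 12c^{-1}\sigma_w^2\ln(2T^4)/\delta\le 12c^{-1}\sigma_w^2\ln(2T^4)/w(T)$, and the total is at most $LR(T)+12c^{-1}\sigma_w^2\ln(2T^4)/w(T)$, which in particular is at most $2LR(T)+32c^{-1}\sigma_w^2\ln(2T^4)/w(T)$, as claimed. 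I expect the main obstacle to be ingredient (iii): converting \Cref{lem:samples_for_cert} into the clean commit-count bound $k_B(T)\le N^*+1$ requires arguing that certification is permanent and that the off-by-one from testing $k_B$ before incrementing goes in the favorable direction; once this is in place, the rest is bookkeeping with the two $\Delta_t$ estimates and routine arithmetic with $\delta$ (the $2L\sqrt{n}+1$ in the hypothesis is precisely what makes $\delta$ exceed both $w(T)$ and $L\sqrt{n}w(T)$, which is what the arithmetic needs).
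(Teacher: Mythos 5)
Your proposal is correct and follows essentially the same route as the paper's proof: bound the number of pre-certification commits via \Cref{lem:samples_for_cert}, bound the per-commit regret via \Cref{lem:bin-bounds} and \Cref{lem:ball}, and finish with margin arithmetic. The only differences are bookkeeping---you work with the margin $\delta=-\mu_B-L\sqrt{n}w(T)$ and charge the single extra commit to the global bound $LR(T)$, whereas the paper uses $|\mu_B|-L\sqrt{n}w(T)\ge|\mu_B|/2$ and $\mu_B\ge -LR(T)$ applied to all commits---and your constants come out slightly tighter than the stated bound, which of course still implies it.
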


\begin{proofidea}
\Cref{lem:samples_for_cert} shows that a negative bin is certified after $O(\sigma_w^2/(\mu_B+L\sqrt{n} w(T))^2)$ commits. Each such commit incurs at most $O(|\mu_B|)$ regret, but \Cref{lem:ball} ensures that $\mu_B\ge -L R(T)$, so the loss per commit is bounded. Multiplying the number of pre-certification commits by the maximum per-step regret yields the stated bound.
\end{proofidea}

Now that we have controlled the regret contribution of each individual bin, we sum across all bins that are ever visited and include the effect of near-margin bins (those with $\mu_B$ close to zero). Such bins may never be certified, but their regret per commit is small, so their total contribution is still controlled. 

\begin{restatable}[Total commit regret inside the trusted region]{lemma}{lemTotalCommit}\label{lem:total-commit-regret}
On $\G$, 
\begin{align*}
\sum_{t:\ y_t=1}\Delta_t &\le \frac{v_1 R(T)^n}{w(T)^n}\left( 2LR(T)+\frac{32c^{-1}\sigma_w^2\ln(2T^4)}{w(T)} \right) \\
&\quad+(3L\sqrt{n}+1) w(T) T.
\end{align*}
\end{restatable}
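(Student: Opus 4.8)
The plan is to split the total commit regret bin by bin and then partition the bins according to how negative their true mean $\mu_B$ is. Since Algorithm~\ref{alg:main} commits only at inputs lying in some bin $B\in\B$, every commit round falls into exactly one such bin, so
$$\sum_{t:\ y_t=1}\Delta_t=\sum_{B\in\B}\ \sum_{t:\ x_t\in B,\ y_t=1}\Delta_t.$$
I would call a bin \emph{strongly negative} if $\mu_B<-(2L\sqrt n+1)w(T)$ and \emph{near-margin} otherwise, and bound the contribution of each group separately.

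For the strongly negative bins I would invoke \Cref{lem:regret-per-bin}: on $\G$, any such bin with $k_B(T)\ge 1$ contributes at most $2LR(T)+\tfrac{32c^{-1}\sigma_w^2\ln(2T^4)}{w(T)}$ to the sum, while a strongly negative bin with $k_B(T)=0$ contributes nothing (there are no commits in it), so the same bound applies vacuously. It then remains to count the bins. Here I would use that the elements of $\mathcal H$ are pairwise-disjoint $n$-cubes of side $w(T)$, hence of volume $w(T)^n$ each, and that by \Cref{lem:ball} the union $\bigcup_{B\in\B}B$ is contained in the ball of radius $R(T)$, whose volume is $v_1 R(T)^n$. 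Disjointness gives $|\B|\,w(T)^n=\vol\!\big(\bigcup_{B\in\B}B\big)\le v_1 R(T)^n$, i.e.\ $|\B|\le v_1 R(T)^n/w(T)^n$. Multiplying the per-bin bound by this count yields the first term $\tfrac{v_1 R(T)^n}{w(T)^n}\big(2LR(T)+\tfrac{32c^{-1}\sigma_w^2\ln(2T^4)}{w(T)}\big)$.

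For a near-margin bin $B$ and any round $t$ with $x_t\in B$ and $y_t=1$, we have $\Delta_t=\max\{r(x_t,0),r(x_t,1)\}-r(x_t,1)=\max\{-r(x_t,1),0\}$. By the within-bin Lipschitz bound — the variation of $r(\cdot,1)$ over any bin is at most $L\sqrt n\,w(T)$, cf.\ \Cref{lem:bin-bounds} — we get $r(x_t,1)\ge\mu_B-L\sqrt n\,w(T)\ge-(3L\sqrt n+1)w(T)$ by the near-margin condition, hence $\Delta_t\le(3L\sqrt n+1)w(T)$. Since at most $T$ rounds are commit rounds in total, the near-margin bins together contribute at most $(3L\sqrt n+1)w(T)\,T$, the second term. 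Adding the two contributions gives the stated inequality.

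I expect the only non-routine step to be the packing count $|\B|\le v_1 R(T)^n/w(T)^n$, which requires care that the cubes restricted to $\B$ are genuinely disjoint with total volume $|\B|\,w(T)^n$ and that they all lie inside the radius-$R(T)$ ball — precisely what \Cref{lem:ball} supplies. Everything else is a direct substitution of the earlier per-bin lemmas and the elementary bound $\Delta_t\le(-r(x_t,1))_+$ on commit rounds.
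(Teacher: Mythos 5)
Your proof is correct and follows essentially the same route as the paper's: the same split into strongly negative versus near-margin bins at threshold $-(2L\sqrt{n}+1)w(T)$, the same application of \Cref{lem:regret-per-bin} combined with the packing bound $|\B|\,w(T)^n\le v_1R(T)^n$ from \Cref{lem:ball}, and the same Lipschitz-based per-round bound $(3L\sqrt{n}+1)w(T)$ over at most $T$ commit rounds for the near-margin bins. No gaps.
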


\begin{proofidea}
We partition commits into bins with decisively negative mean and those near the decision boundary. For $\mu_B$ well below zero, \Cref{lem:regret-per-bin} bounds the regret before certification. Summing over all such bins gives at most $|\B|$ times the per-bin cost, and by the packing bound $|\B| w(T)^n\le v_1 R(T)^n$, this gives the first term. For bins near the margin, the algorithm may continue committing longer, but Lipschitzness bounds the per-round regret by $(3L\sqrt{n}+1)w(T)$, giving the second term after $T$ rounds. 
\end{proofidea}

\Cref{lem:total-commit-regret} completes the analysis of commit regret inside the trusted region. Combining this with the abstention regret outside the ball of radius $m(T)$, we yield the final rate in~\Cref{thm:main} as follows.

\begin{proofidea}[ of~\Cref{thm:main}]
Regret decomposes into (i) abstention outside the trusted ball; (ii) commits inside. 

For (i), each input with $\|x_t\|>m(T)$ contributes at most $1$, giving $T\bar\nu(m(T))$, which is sublinear for any fixed $\nu$ since $\bar\nu(\ln T)\to 0$. For (ii), Lipschitz continuity bounds the within-bin variation, and empirical means stay within confidence radii under the highly likely good event $\G$. Negative bins are certified after $O(\sigma_w^2/\text{margin}^2)$ commits, so each contributes at most $O(LR(T)+\sigma_w^2\log T/w(T))$ regret. Summing across $O((m(T)/w(T))^n)$ bins gives
$$\tilde O\left(R(T)^n \big(LR(T)w(T)^{-n} + \sigma_w^2 w(T)^{-(n+1)}\big) \right),$$ and bins near the decision boundary contribute an additional $O(w(T)T)$. 
\end{proofidea}

If we ignore log factors, the leading terms trade off
$$\underbrace{R(T)^n w(T)^{-(n+1)}}_{\text{variance-driven}}
\quad\text{vs.}\quad
\underbrace{w(T) T}_{\text{margin-driven}}.$$
Balancing these yields the optimal choice $w(T)\asymp T^{-1/(n+2)}$. Independently, the radius $m(T)$ trades off the abstention term $T\bar{\nu}(m(T))$ against the growth of the volume factor $R(T)^n$. Choosing $m(T)=\ln T$ makes $T\bar{\nu}(m(T))$ sublinear for any fixed $\nu$ (since $\bar{\nu}(\ln T) \to 0$) while increasing $R(T)$ only logarithmically.

\section{CONCLUSION}
In this work, we introduced a formal model for safe learning under distribution shift in contextual bandits with catastrophic tails, provided impossibility results that clarify when sublinear regret is unattainable, and gave a cautious risk-sensitive algorithm with sublinear regret under suitable conditions. Our work has several limitations, which also provide directions for future work. These include relaxing Lipschitz continuity, extending the analysis to non-i.i.d. inputs or worst-case sequences, and incorporating additional structure that could support stronger guarantees or improved rates. \looseness-1

\paragraph{Our regret bound can be close to linear.}
In~\Cref{thm:main}, the abstention term $T\bar\nu(\ln T)$ can dominate for heavy-tailed inputs (e.g., power laws). This is the price of caution: avoiding catastrophic far OOD commits requires systematic abstention in the tails, and the resulting regret can be unavoidable (see the impossibility in~\Cref{thm:neg-adv}). Moreover, while the bound is sublinear for every fixed $n$, the exponent $(n+1)/(n+2)\to 1$ as $n\to\infty$, which is a standard curse of dimensionality in Lipschitz contextual bandits \citep[Thms.~4.11–4.12]{slivkins2019introduction}; see also \citet[Thm.~10]{plaut2025safe}. While we do not expect to remove these dependencies entirely, future work could improve rates. The simplicity of \Cref{alg:main} is appealing but ignores useful information: commits inform not only their own bin but also nearby bins via Lipschitz continuity, and certifying a bin as positive could justify expanding the trusted region around it. Additional structural assumptions, such as margin/low-noise conditions, intrinsic low dimensionality, or smoothness beyond Lipschitz, could also help.

\paragraph{Assumptions may not always hold.} Our guarantees here rely on i.i.d. inputs and Lipschitz continuity of the commit reward. In practice, inputs may drift or exhibit temporal dependence, and rewards may be only piecewise smooth or even non-smooth. Extending the analysis to weaker smoothness conditions or drifting processes is an important direction. Moreover, \Cref{alg:main} assumes knowledge of $L$, $\sigma^2$, and $T$. While knowledge of $T$ can be handled by the standard doubling trick (see \citet[\S1.5]{slivkins2019introduction}), $L$ and $\sigma^2$ may be unknown. Thus, developing parameter-free (or adaptively tuned) algorithms that remain cautious would increase robustness

\paragraph{No unconditionally irreparable errors.} Obtaining regret $-T$ on a single time step is irreparable in the sense that it automatically implies linear regret on that run. However, errors in our model are only irreparable for a fixed $T$: for any error, there exists a large enough $T$ that the error is no longer catastrophic. It may be worth considering alternative models of catastrophe such as inescapable trap states in MDPs which do allow for errors that are unconditionally catastrophic. 

\paragraph{Broader impact.} This work is motivated by safety concerns in the deployment of learning systems in high-stakes domains. We provide theoretical justification for abstention as a mechanism for averting catastrophic errors under distribution shift, and abstention is also a practical choice for deployed systems. Agents that can defer action when uncertain may be safer and more trustworthy, but abstention mechanisms must be designed carefully to avoid consequences such as excessive conservatism or over-reliance on human supervision.

\section*{Acknowledgments}
This work was supported by a gift from Open Philanthropy to the Center for Human-Compatible AI (CHAI) at UC Berkeley. We thank Vamshi Bonagiri and Pavel Czempin for their insightful discussions and valuable feedback.

S. Liaw and B. Plaut contributed equally as co-first authors to this work. S. Liaw contributed to the early development of the project, including exploring several initial ideas and leading the development of the proof structure and intermediate arguments. The project was conceived and supervised by B. Plaut, who developed the final direction of the work and completed the final proof. S. Liaw wrote the first draft of the manuscript, with B. Plaut providing detailed feedback.



\clearpage

\bibliographystyle{apalike}
\bibliography{ref}


\section*{Checklist}

\begin{enumerate}

  \item For all models and algorithms presented, check if you include:
  \begin{enumerate}
    \item A clear description of the mathematical setting, assumptions, algorithm, and/or model. \textbf{Yes}, \autoref{sec:model} and \autoref{sec:algorithm}.
    \item An analysis of the properties and complexity (time, space, sample size) of any algorithm. \textbf{Yes}, \autoref{sec:algorithm} and \autoref{app:proofs}.
    \item (Optional) Anonymized source code, with specification of all dependencies, including external libraries. \textbf{Not Applicable}, no code was used for this work.
  \end{enumerate}

  \item For any theoretical claim, check if you include:
  \begin{enumerate}
    \item Statements of the full set of assumptions of all theoretical results. \textbf{Yes}, \autoref{sec:model} and \autoref{sec:algorithm}.
    \item Complete proofs of all theoretical results. \textbf{Yes}, \autoref{app:proofs}.
    \item Clear explanations of any assumptions. \textbf{Yes}, \autoref{sec:model}.
  \end{enumerate}

  \item For all figures and tables that present empirical results, check if you include:
  \begin{enumerate}
    \item The code, data, and instructions needed to reproduce the main experimental results (either in the supplemental material or as a URL). \textbf{Yes}, \autoref{app:sim}.
    \item All the training details (e.g., data splits, hyperparameters, how they were chosen). \textbf{Yes}, \autoref{app:sim}.
    \item A clear definition of the specific measure or statistics and error bars (e.g., with respect to the random seed after running experiments multiple times). \textbf{Yes}, \autoref{app:sim}.
    \item A description of the computing infrastructure used. (e.g., type of GPUs, internal cluster, or cloud provider). \textbf{Yes}, \autoref{app:sim}.
  \end{enumerate}

  \item If you are using existing assets (e.g., code, data, models) or curating/releasing new assets, check if you include:
  \begin{enumerate}
    \item Citations of the creator If your work uses existing assets. \textbf{Not Applicable}, no use of existing assets or release of new assets.
    \item The license information of the assets, if applicable. \textbf{Not Applicable}, no use of existing assets or release of new assets.
    \item New assets either in the supplemental material or as a URL, if applicable. \textbf{Not Applicable}, no use of existing assets or release of new assets.
    \item Information about consent from data providers/curators. \textbf{Not Applicable}, no use of existing assets or release of new assets.
    \item Discussion of sensible content if applicable, e.g., personally identifiable information or offensive content. \textbf{Not Applicable}, no use of existing assets or release of new assets.
  \end{enumerate}

  \item If you used crowdsourcing or conducted research with human subjects, check if you include:
  \begin{enumerate}
    \item The full text of instructions given to participants and screenshots. \textbf{Not Applicable}, no research with human subjects.
    \item Descriptions of potential participant risks, with links to Institutional Review Board (IRB) approvals if applicable. \textbf{Not Applicable}, no research with human subjects.
    \item The estimated hourly wage paid to participants and the total amount spent on participant compensation. \textbf{Not Applicable}, no research with human subjects.
  \end{enumerate}

\end{enumerate}

\clearpage
\onecolumn
\appendix
\section{PROOF OF MAIN RESULT}\label{app:proofs}

\subsection{Proof notation}

The proof will use the following notation:
\begin{enumerate}[
    topsep=0.5ex,
    partopsep=0pt,
    parsep=0pt,
    itemsep=0.5ex]
    \item The true mean of bin $B$ is $\mu_B=\mathbb{E}_{x\sim\nu}[r(x,1)\mid x\in B]$.
    \item Let $k_B(t)$ denote the value of the variable $k_B$ in Algorithm~\ref{alg:main} at the end of time step $t$.
    \item  Let $\hmub(k)$ denote the value of the variable $\hmub$ in Algorithm~\ref{alg:main} after $k$ rewards from bin $B$ have been observed.
    \item Let $\sigma_w = \sqrt{n L^2w(T)^2 + \sigma^2}$ for brevity.
    \item Define the confidence radius $\gamma(k) = \sqrt{\frac{c^{-1}\sigma_w^2\ln(2T^4)}{k}}$ where $c$ is the absolute constant from Lemma~\ref{lem:concentrate}.
    \item Define the good event $\G = \{\forall t\in[T], \forall B \in \B \text{ where } k_B(t) > 0: |\hmub(k_B(t)) - \mu_B| \le \gamma(k_B(t))\}$.
    \item A bin $B$ is certified negative at time $t$ if $\hmub(k_B(t)) + \gamma(k_B(t)) + L\sqrt{n}w(T) < 0$. 
    \item Let $\bar{\nu}$ be the radial survival function of $\nu$. That is, for any $y \in \bbrpos$, $\bar{\nu}(y) = \Pr_{x\sim \nu}[\norm{x} \ge y]$.
    \item Let $\Delta_t = \max_{y^* \in \{0,1\}} r(x_t,y^*) - r(x_t,y_t)$ be the single-step regret at time $t$.
    \item Let $v_1$ be the volume of the unit ball $\{x\in\X: \norm{x} \le 1\}$.
    \item Let $R(T) = m(T) + \sqrt{n}w(T)$. This will be the maximum distance of any input in $\cup_{B\in\B} B$ from the origin.
\end{enumerate}


\begin{lemma}[Hoeffding's Lemma, Lemma 2.2 in \citealp{boucheron2013concentration}]\label{lem:hoeffding}
If $Z$ is a random variable taking values in the bounded interval $[a,b]$, then $Z$ is $(\frac{b-a}{2})$-subgaussian.
\end{lemma}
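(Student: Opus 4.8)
The plan is to verify the moment-generating-function bound in Definition~\ref{def:subgaussian} directly, for the variance proxy $\sigma=\tfrac{b-a}{2}$. Writing $\mu=\E[Z]$, since $\tfrac12\sigma^2=\tfrac{(b-a)^2}{8}$ it suffices to show
\[
\E\big[\exp(\lambda(Z-\mu))\big]\ \le\ \exp\!\Big(\tfrac{(b-a)^2\lambda^2}{8}\Big)\qquad\text{for all }\lambda\in\bbr .
\]
Replacing $Z$ by $Z-\mu$ we may assume $\E[Z]=0$; this only translates the range to an interval of the same length, so I will keep writing $Z\in[a,b]$ with length $\ell:=b-a$, and the target becomes $\E[e^{\lambda Z}]\le e^{\lambda^2\ell^2/8}$.

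First I would introduce the log-moment-generating function $\psi(\lambda):=\ln\E[e^{\lambda Z}]$, which is finite and infinitely differentiable on all of $\bbr$ because $Z$ is bounded (dominated convergence justifies differentiating under the expectation on any compact $\lambda$-interval). A direct computation gives $\psi(0)=0$ and $\psi'(0)=\E[Z]=0$, and
\[
\psi''(\lambda)=\frac{\E[Z^2e^{\lambda Z}]}{\E[e^{\lambda Z}]}-\left(\frac{\E[Ze^{\lambda Z}]}{\E[e^{\lambda Z}]}\right)^{\!2},
\]
which is exactly the variance of $Z$ under the exponentially tilted law having density proportional to $e^{\lambda Z}$ with respect to the original law of $Z$. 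The key step — and the only one requiring any thought — is the uniform bound $\psi''(\lambda)\le \ell^2/4$: since $Z$ still takes values in an interval of length $\ell$ under the tilted law, this is Popoviciu's inequality, i.e.\ $\mathrm{Var}(Z)\le\E[(Z-c)^2]$ for any constant $c$, and taking $c$ to be the interval midpoint makes $|Z-c|\le\ell/2$ pointwise. Given this, a second-order Taylor expansion of $\psi$ about $0$ with Lagrange remainder yields, for some $\xi$ between $0$ and $\lambda$,
\[
\psi(\lambda)=\psi(0)+\lambda\,\psi'(0)+\tfrac{\lambda^2}{2}\psi''(\xi)\ \le\ \tfrac{\lambda^2}{2}\cdot\tfrac{\ell^2}{4}=\tfrac{\lambda^2\ell^2}{8},
\]
and exponentiating gives the claimed bound.

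The main obstacle is genuinely just the variance-of-a-bounded-variable estimate, which is elementary; the only bookkeeping subtlety is justifying that $\psi$ is smooth and that differentiation passes under the expectation, both immediate from boundedness of $Z$. If one prefers to avoid calculus entirely, an equivalent route uses convexity of $t\mapsto e^{\lambda t}$ on $[a,b]$ to write $e^{\lambda Z}\le\frac{b-Z}{\ell}e^{\lambda a}+\frac{Z-a}{\ell}e^{\lambda b}$ pointwise, takes expectations using $\E[Z]=0$ to get an explicit one-variable function of $h:=\lambda\ell$, and bounds that function by $h^2/8$ via its own Taylor expansion; I would present the tilted-variance version above for brevity.
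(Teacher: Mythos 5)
Your proof is correct. The paper does not prove this lemma itself---it imports it directly from Lemma~2.2 of \citet{boucheron2013concentration}---and your argument (centering, bounding $\psi''(\lambda)$ by $(b-a)^2/4$ via the variance of the exponentially tilted law, then Taylor with Lagrange remainder) is essentially the same standard proof given in that reference, so there is nothing to add beyond noting that the smoothness/differentiation-under-the-expectation justification you mention is indeed immediate from boundedness of $Z$.
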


\begin{lemma}[Hoeffding's Inequality, subgaussian version]
\label{lem:concentrate}
Let $X_1,\dots,X_k$ be  independent random variables with mean zero, where each $X_i$ is $\sigma_i$-subgaussian for some $\sigma_i > 0$. Then there exists an absolute constant $c > 0$ such that for any $\ep > 0$, 
\[
\Pr\left[\Bigg|\sum_{i=1}^k X_i \Bigg|  > \ep\right] \le 2\exp\left(-\frac{c\ep^2}{\sum_{i=1}^k \sigma_i^2}\right)
\]
\end{lemma}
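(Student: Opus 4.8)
The plan is the textbook Chernoff (exponential-moment) argument, using the fact that a sum of independent subgaussian variables is again subgaussian with variance proxy equal to the sum of the individual proxies. Write $S=\sum_{i=1}^k X_i$ and $s^2:=\sum_{i=1}^k\sigma_i^2$.

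First I would bound the moment generating function of $S$. For any $\lambda\in\bbr$, independence gives $\E[\exp(\lambda S)]=\prod_{i=1}^k\E[\exp(\lambda X_i)]$, and since each $X_i$ has mean zero and is $\sigma_i$-subgaussian, Definition~\ref{def:subgaussian} bounds the $i$-th factor by $\exp(\sigma_i^2\lambda^2/2)$. Multiplying yields $\E[\exp(\lambda S)]\le\exp(s^2\lambda^2/2)$; that is, $S$ is itself $s$-subgaussian.

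Next I would apply Markov's inequality to $\exp(\lambda S)$ for $\lambda>0$: $\Pr[S>\ep]\le\exp(-\lambda\ep)\,\E[\exp(\lambda S)]\le\exp(s^2\lambda^2/2-\lambda\ep)$, and optimizing the right-hand side by taking $\lambda=\ep/s^2$ gives $\Pr[S>\ep]\le\exp(-\ep^2/(2s^2))$. Since $-S=\sum_{i}(-X_i)$ is again a sum of independent mean-zero $\sigma_i$-subgaussian variables, the identical computation gives $\Pr[S<-\ep]\le\exp(-\ep^2/(2s^2))$; a union bound over the two one-sided events yields $\Pr[|S|>\ep]\le 2\exp(-\ep^2/(2s^2))$, which is the claim with absolute constant $c=1/2$.

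There is essentially no obstacle here: this is a standard concentration inequality (see, e.g., \citealp{boucheron2013concentration}), and the only genuine choices are the optimizing value $\lambda=\ep/s^2$ and the two-sided union bound. If one preferred not to name an explicit constant, one could instead invoke the equivalence between the MGF and tail characterizations of subgaussianity, but the direct Chernoff bound above is cleanest and furnishes the concrete value $c=1/2$ (which is all that the downstream use of $\gamma(k)$ in Algorithm~\ref{alg:main} requires).
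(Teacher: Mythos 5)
Your proof is correct: this is the standard Chernoff/MGF argument (sum of independent mean-zero subgaussians is $s$-subgaussian with $s^2=\sum_i\sigma_i^2$, then optimize $\lambda=\ep/s^2$ and union bound the two tails), and it even pins down the explicit constant $c=1/2$. The paper states Lemma~\ref{lem:concentrate} without proof as a standard concentration fact, so your derivation matches the intended (textbook) argument exactly.
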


\begin{lemma}
    \label{lem:bin-bounds}
If $x \in B \in \B$, then $|r(x,1) - \mu_B| \le L\sqrt{n}w(T)$.
\end{lemma}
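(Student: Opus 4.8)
The plan is to combine the $L$-Lipschitz assumption on $r(\cdot,1)$ with the elementary geometric fact that an $n$-cube of side length $w(T)$ has diameter $\sqrt{n}\,w(T)$. First I would record that, since $x\in B$ and $B$ is a cube of side length $w(T)$, every point $x'\in B$ satisfies $\norm{x-x'}\le\sqrt{n}\,w(T)$ (the longest segment contained in a cube is its main diagonal). Combined with the Lipschitz assumption, this yields the pointwise bound $|r(x,1)-r(x',1)|\le L\norm{x-x'}\le L\sqrt{n}\,w(T)$ for every $x'\in B$.

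Next, since $\mu_B=\E_{x'\sim\nu}[\,r(x',1)\mid x'\in B\,]$ is a (conditional) average of the values $r(x',1)$ over $x'\in B$, I would write $r(x,1)-\mu_B=\E_{x'\sim\nu}[\,r(x,1)-r(x',1)\mid x'\in B\,]$ and apply Jensen's inequality (equivalently, the triangle inequality for conditional expectations): $|r(x,1)-\mu_B|\le\E_{x'\sim\nu}[\,|r(x,1)-r(x',1)|\mid x'\in B\,]\le L\sqrt{n}\,w(T)$, where the last step invokes the pointwise bound from the previous paragraph. This is exactly the claimed inequality.

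There is no substantive obstacle here; the only point deserving a word of care is that $\mu_B$ is defined only when $\Pr_{x'\sim\nu}[x'\in B]>0$. Bins of zero $\nu$-measure are never visited by Algorithm~\ref{alg:main} and contribute nothing to the regret, so it is harmless to restrict attention to positive-measure bins, on which the conditional expectation is well-defined and the argument above goes through verbatim.
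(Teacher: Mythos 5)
Your proposal is correct. It takes a mildly different route from the paper's proof: the paper sets $r^-=\inf_{x'\in B}r(x',1)$ and $r^+=\sup_{x'\in B}r(x',1)$, observes that both $r(x,1)$ and $\mu_B$ lie in $[r^-,r^+]$, and then bounds the length of that interval by $L\sqrt{n}\,w(T)$ using near-extremal points $x^-,x^+\in B$ and an $\ep\to 0$ argument; you instead write $r(x,1)-\mu_B=\E_{x'\sim\nu}[\,r(x,1)-r(x',1)\mid x'\in B\,]$ and apply Jensen's inequality together with the pointwise bound $|r(x,1)-r(x',1)|\le L\|x-x'\|\le L\sqrt{n}\,w(T)$. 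Both arguments rest on the same two ingredients (the cube's diameter and the fact that $\mu_B$ is a within-bin average), but your version is more direct, avoids the supremum/infimum $\ep$-bookkeeping, and explicitly flags the point that $\mu_B$ is only defined for bins of positive $\nu$-measure, which the paper leaves implicit (its step $r^-\le\mu_B\le r^+$ is the same averaging fact you invoke via conditional expectation). The conditional distribution given $x'\in B$ is supported in $B$ and $r(\cdot,1)$ is bounded on the bounded set $B$, so the exchange of expectation and absolute value is legitimate; the bounds obtained are identical.
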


\begin{proof}
Let $r^- = \inf_{x' \in B} r(x',1)$ and $r^+ = \sup_{x' \in B} r(x', 1)$. Then $r^- \le \mu_B\le r^+$ and $r^- \le r(x,1) \le r^+$. Next, for any $\ep > 0$, there exists $x^-,x^+ \in B$ such that $r(x^-,1) - \ep < r^-$ and $r(x^+,1) + \ep > r^+$ (if not, this contradicts $r^-$ and $r^+$ being the infimum and supremum). Then $r(x,1)$ and $\mu_B$ belong to the interval $[r^-,r^+]$, which is a subset of the interval $[r(x^-,1)-\ep, r(x^+,1)+\ep]$.

The maximum distance between two points in an $n$-cube with side length $w(T)$ is equal to the length of the diagonal of this $n$-cube, which is $\sqrt{n}w(T)$. Since $x^-$ and $x^+$ belong to the same $n$-cube with side length $w(T)$, we have $\norm{x^--x^+} \le \sqrt{n}w(T)$. Then by Lipschitz continuity,  $|r(x^+,1) - r(x^-,1)| = r(x^+,1) - r(x^-,1)\le L\sqrt{n}w(T)$. Therefore $r^+ - r^- \le L\sqrt{n}w(T) + 2\ep$. Since $r^- \le r(x,1) \le r^+$ and $r^- \le \mu_B \le r^+$, we have $|r(x,1) - \mu_B| \le L\sqrt{n}w(T)+2\ep$. Since this holds for all $\ep > 0$, we must have $|r(x,1) - \mu_B| \le L\sqrt{n}w(T)$.
\end{proof}

\lemPerBinConcentration*

\begin{proof}

Fix any bin $B \in \B$.

\textbf{Part 1: The reward tape construction.} We will prove equivalence between the learning process and alternative stochastic process where independence is easier to analyze. The idea is to sample upfront a list of all of the rewards we will ever need for bin $B$. Then, whenever $x_t \in B$ and $y_t = 1$ occurs, the agent simply observes the next unused reward from our list. This is essentially the ``reward tape'' argument used in Section 1.3.1 of \citet{slivkins2019introduction}. 

To formalize this, let $K$ be the maximum possible (over all realizations of randomness) number of time steps the agent commits on an input in $B$. Before $t=1$, sample $\eta^1_B,\dots,\eta^K_B$ i.i.d. from the same distribution as each $\eta_t$ and sample $x^1_B,\dots,x^K_B$ i.i.d. from the distribution of $\nu$ restricted to $B$, i.e., $x^1_B,\dots,x^K_B \sim \nu(x \mid x \in B)$. For each $k \in [K]$, let $q^k_B = r(x^k_B,1) + \eta^k_B$. For each $t \in [T]$, if $x_t \in B$ and $y_t = 1$, the agent observes $q^{k_B(t)}_B$ instead of the standard reward $r_t = r(x_t,1) + \eta_t$. If $y_t = 0$, the agent observes reward $0$ as usual.


We claim this is equivalent to the normal learning protocol in the following sense. For $k \in [K]$, let $r_B^k$ be the random variable of the $k$th reward observed from committing on an input in bin $B$ in the normal learning protocol. For brevity, for any $k \in [K]$, let $r_B^{1:k}$ and $q_B^{1:k}$ denote $r_B^1,\dots,r_B^k$ and $q_B^1,\dots,q_B^k$ respectively. Let $\D(X)$ denote the distribution of a random variable $X$. We claim that $\D(r_B^{1:k}) = \D(q^{1:k}_B)$. We proceed by induction from $k = 0$ to $K$ (for completeness, define $r_B^0 = q^0_B$ as any constant). The base case at $k=0$ is trivial, so suppose that $\D(r_B^{1:{k-1}}) = \D(q^{1:k-1}_B)$ for some $i \in [k]$. Let $t_k = \min \{t\in[T]: k_B(t) = k\}$ be the time step on which the $k$th reward in bin $B$ is observed. Then by definition we have $
\D(r_B^k \mid r_B^{1:k-1}) = \D(r(x_t,1) + \eta_t \mid t = t_k, r_B^{1:k-1})$ for any $t \in [T]$.\looseness=-1

We claim that conditioned on $x_t \in B$, $r(x_t,1) + \eta_t$ is independent of $t=t_k$ and $r_B^{1:k-1}$. Note that the event $\{t = t_k\}$ is equivalent to $\{x_t \in B \text{ and } y_t = 1 \text{ and } k_B(t-1) = k-1\}$. The key property is that the algorithm's behavior never distinguishes between inputs in the same bin. Formally, conditioned on $x_t \in B$, $y_t$ and $x_t$ are independent. This remains true when also conditioning on the events of prior time steps (in particular, $k_B(t-1)$ and $r_B^{1:k-1}$). Hence
\begin{align*}
\D(r_B^k \mid r_B^{1:k-1}) =&\ \D(r(x_t,1)+\eta_t \mid x_t \in B, y_t = 1, r_B^{1:k-1}, k_B(t-1) = k-1)\\
=&\ \D(r(x_t,1)+\eta_t \mid x_t \in B, r_B^{1:k-1}, k_B(t-1) = k-1)\\
=&\ \D(r(x_t,1)+\eta_t \mid x_t \in B)
\end{align*}
with the last time step because $x_t$ and $\eta_t$ are independent of the events of prior time steps.

By construction, $\D(x_B^k) = \D(x_t \mid x_t \in B)$ and $\D(\eta_B^k) = \D(\eta_t)$. Since $x_B^k$ and $\eta_B^k$ are independent (including when conditioned on $x_t \in B$), we have $\D(x_B^k,\eta_B^k) = \D(x_t,\eta_t \mid x_t \in B)$. Applying the transformation $x,\eta\mapsto r(x,1) + \eta$, we get $\D(r(x_B^k, 1) + \eta_B^k) =  \D(r(x_t,1) + \eta_t \mid x_t \in B)$. We showed above that $\D(r_B^k \mid r_B^{1:k-1}) = \D(r(x_t,1) + \eta_t \mid x_t \in B)$ and we have $\D(q_B^k) = \D(r(x_B^k,1) + \eta_B^k)$ by definition. Thus $\D(q_B^k) = \D(r_B^k \mid r_B^{1:k-1})$.

Since $x_B^{1:k},\eta_B^{1:k}$ are all independent, $q_B^k$ is independent of $q_B^{1:k-1}$. Thus  $\D(q_B^k \mid q_B^{1:k-1}) = \D(q_B^k) = \D(r_B^k \mid r_B^{1:k-1})$. Combining with this the inductive hypothesis $\D(q_B^{1:k-1}) = \D(r^{1:k-1}_B)$ gives us $\D(q_B^{1:k}) = \D(r^{1:k}_B)$, as needed. This completes the induction and shows that $\D(q_B^{1:k}) = \D(r^{1:k}_B)$ for any $k \in [K]$.

Recall that $\hmub(k)$ is the mean of the first $k$ rewards observed from committing on an input in $B$: $\hmub(k) = \frac{1}{k}\sum_{i=1}^k r_B^i$. Therefore for all $k \in [K]$, $\D(k\hmub(k)) = \D( \sum_{i=1}^k q^i_B)$.

\textbf{Part 2: Applying Lemma~\ref{lem:concentrate}.} Now fix some $k \in [K]$ and $B \in \B$. For each $i \in [k]$, let $Z_i = r(x^i_B, 1) - \mu_B$. By construction $\E[r(x^i_B, 1)] = \E_{x\sim \nu}[r(x,1) \mid x \in B] = \mu_B$, so $\E[Z_i] = 0$. Since $x^i_B \in B \in \B$, Lemma~\ref{lem:bin-bounds} implies that $|r(x^i_B,1) - \mu_B| \le L\sqrt{n}w(T)$. Thus $Z_i$ belongs to an interval of length $2L\sqrt{n}w(T)$, so $Z_i$ is $(L\sqrt{n}w(T))$-subgaussian. Also, $\eta^i_B$ is $\sigma$-subgaussian with mean 0, since $\eta^i_B$ has the same distribution as any $\eta_t$. Furthermore, our construction ensures that $Z_1,\dots,Z_k, \eta^1_B,\dots,\eta^k_B$ are all independent. Therefore by Lemma~\ref{lem:concentrate}, for any $\ep>0$,
\begin{align*}
\Pr\left[\Bigg|\sum_{i=1}^k (r(x^i_B, 1) - \mu_B + \eta^i_B) \Bigg|> \ep\right] =&\ 
\Pr\left[\Bigg|\sum_{i=1}^k Z_i + \sum_{i=1}^k \eta^i_B \Bigg|
> \ep\right]\\
\le&\ 2\exp\left(-\frac{c\ep^2}{\sum_{i=1}^k(L\sqrt{n}w(T))^2 + \sum_{i=1}^k\sigma^2}\right)\\
=&\ 2\exp\left(-\frac{c\ep^2}{k\sigma_w^2}\right)
\end{align*}
\textbf{Part 3: Converting the inequality to $\hmub(k_B(t))$.} By Part 1, $\D(k\hmub(k)) = \D(\sum_{i=1}^k q^k_B)$ for any $k \in [K]$. Since $\mu_B$ is a constant, we have
\[
\D(k\hmub(k) - k\mu_B) =\D\left(\sum_{i=1}^k q^i_B - k\mu_B\right) = \D\left(\sum_{i=1}^k (r(x^i_B, 1) - \mu_B + \eta^i_B)\right)
\]
Therefore
\[
\Pr\big[|k\hmub(k) -k\mu_B| > \ep\big] = \Pr\left[\Bigg|\sum_{i=1}^k (r(x^i_B, 1) - \mu_B + \eta^i_B) \Bigg|> \ep\right] \le 2\exp\left(-\frac{c\ep^2}{k\sigma_w^2}\right)
\]
Now set $\ep=k\gamma(k)=\sqrt{kc^{-1}\sigma_w^2\ln(2T^4)}$ to get
\begin{align*}
\Pr\left[|\hmub(k) - \mu_B| > \gamma(k)\right] =&\ \Pr\left[k|\hmub(k) - \mu_B| > \sqrt{kc^{-1}\sigma_w^2 \ln(2T^4)}\right]\\
=&\ \Pr\left[\Bigg|\sum_{i=1}^k Z_i + \sum_{i=1}^k \eta^i_B \Bigg| > \sqrt{kc^{-1}\sigma_w^2 \ln(2T^4)}\right]\\
\le&\ 2\exp(-\ln(2 T^4))\\
=&\ 2\exp\left(\ln\left(\frac{1}{2 T^4}\right)\right)\\
=&\ T^{-4}
\end{align*}
The last step is to convert $\hmub(k) $ to $\hmub(k_B(t))$. Suppose $|\hmub(k_B(t)) - \mu_B| > \gamma(k_B(t))$ for some $B \in \B,t\in[T]$ with $k_B(t) > 0$. Then $\exists k \in [T]$ such that $k_B(t) = k$ and $|\hmub(k) - \mu_B| > \gamma(k)$. Thus by the union bound,
\begin{align*}
\Pr\left[|\hmub(k_B(t)) - \mu_B| > \gamma(k_B(t))\right] =&\ \Pr\big[\exists k \in [T]: k_B(t) = k \textnormal{ and } |\hmub(k) - \mu_B| > \gamma(k)\big]\\
\le&\ \sum_{k=1}^T \Pr\big[k_B(t) = k \textnormal{ and } |\hmub(k) - \mu_B| > \gamma(k)\big]\\
\le&\ \sum_{k=1}^T \Pr\big[|\hmub(k) - \mu_B| > \gamma(k)\big]\\
\le&\ \sum_{k=1}^T T^{-4}\\
=&\ T^{-3}
\end{align*}
\end{proof}


\lemUniformConcentration*

\begin{proof}
Let $J$ be the the number of bins that ever receive at least one commit, i.e., $J = |\{B \in \B: \exists t\in[T] \text{ s.t. } x_t \in B, y_t = 1\}|$. For each $j \in [J]$, let $B_j$ be the $j$th bin to receive a commit.
\begin{align*}
&\Pr[\neg \G ] =\ \E[\Pr[\neg \G \mid J,B_1,\dots,B_J]] && \textnormal{(Law of total expectation)}\\
&=\ \E\left[\Pr\left[\bigcup_{j=1}^J\: \bigcup_{t: k_{B_j}(t) > 0} \{|\hat{\mu}_{B_j}(k_{B_j}(t)) - \mu_{B_j}| > \gamma(k_{B_j}(t))\} \right]\ \Big|\ J,B_1,\dots,B_J\right] && \textnormal{(Direct negation)}\\
&\le\ \E\left[\sum_{j=1}^J\: \sum_{t: k_{B_j}(t) > 0} \Pr[|\hat{\mu}_{B_j}(k_{B_j}(t)) - \mu_{B_j}| > \gamma(k_{B_j}(t))] \ \Big|\ J,B_1,\dots,B_J\right] && \textnormal{(Union bound)}\\
&\le\ \E\left[\sum_{j=1}^J\: \sum_{t: k_{B_j}(t) > 0} T^{-3} \ \Big|\ J,B_1,\dots,B_J\right] && \textnormal{(Lemma~\ref{lem:per-bin-concentration})}\\
&\le\ \E\left[T^{-1}\ \Big|\ J,B_1,\dots,B_J\right] && \textnormal{$(J \in [T])$}\\
&\le\ T^{-1} && \textnormal{(Expectation of a constant)}
\end{align*}
as required.
\end{proof}

\lemSamplesForCert*

\begin{proof}
Note that $\mu_B< - L\sqrt{n}w(T)$ implies that the denominator is well-defined. By assumption on $k_B(t)$,
\begin{align*}
\gamma(k_B(t)) =&\ \sqrt{\frac{c^{-1}\sigma_w^2\ln(2T^4)}{k_B(t)}}\\ 
<&\ \sqrt{\frac{(\mu_B + L\sqrt{n}w(T))^2}{4}}\\
=&\ \frac{|\mu_B + L\sqrt{n}w(T)|}{2}\\
=&\ -\frac{\mu_B + L\sqrt{n}w(T)}{2}
\end{align*}
By definition of $\G$, we have $- \gamma(k_B(t))\le \hmub(k_B(t)) - \mu_B \le \gamma(k_B(t))$, so
\begin{align*}
\hmub(k_B(t)) + \gamma(k_B(t)) + L\sqrt{n}w(T) \le&\ \mu_B + 2\gamma(k_B(t)) + L\sqrt{n}w(T)\\
<&\ \mu_B - (\mu_B + L\sqrt{n}w(T) + L\sqrt{n}w(T)\\
=&\ 0
\end{align*}
so $B$ is certified negative at time $t$.
\end{proof}


\lemBall*

\begin{proof}
If $x \in B$ for some $B \in \B$, there must exist $x' \in B$ such that $\norm{x'} \le m(T)$. The maximum distance between any pair of points in an $n$-cube with side length $w(T)$ is $\sqrt{n}w(T)$. Thus by the triangle inequality, $x$ satisfies
\[
\norm{x} \le \norm{x'} + \norm{x-x'} \le m(T) + \sqrt{n}w(T) = R(T)
\]
as required.
\end{proof}


\lemRegretPerBin*

\begin{proof}
Since $\mu_B < - (2\sqrt{n} L+1)w(T) < - L\sqrt{n}w(T)$ and $\G$ holds, Lemma~\ref{lem:samples_for_cert} implies that $B$ is certified negative on the first time step $t$ such that $k_B(t) > \frac{4c^{-1}\sigma_w^2\ln(2T^4)}{(\mu_B + L\sqrt{n}w(T))^2}$. Once $B$ is certified negative, we never again commit in $B$, so the number of commits in $B$ is  $|\{t \in [T]: x_t \in B, y_t = 1\}| \le \lceil \frac{4c^{-1}\sigma_w^2\ln(2T^4)}{(\mu_B + L\sqrt{n}w(T))^2}\rceil \le 1 + \frac{4c^{-1}\sigma_w^2\ln(2T^4)}{(|\mu_B| - L\sqrt{n}w(T))^2}$. Since $|\mu_B| \ge (2L\sqrt{n}+1)w(T) \ge 2L\sqrt{n}w(T)$, we have
\begin{align*}
|\mu_B| = \frac{|\mu_B|}{2} + \frac{|\mu_B|}{2}
\ge \frac{|\mu_B|}{2} + L\sqrt{n}w(T)
\end{align*}
so $|\mu_B| - L\sqrt{n}w(T) \ge |\mu_B/2|$. Therefore $(|\mu_B| - L\sqrt{n}w(T))^2 \ge \mu_B^2 /4$, so $|\{t \in [T]: x_t \in B, y_t = 1\}| \le 1 +\frac{16 c^{-1}\sigma_w^2\ln(2T^4)}{\mu_B^2}$.

For any $t \in [T]$ such that $y_t = 1$, either $y_t = 1$ is optimal or $y_t = 0$ is optimal. In the former case,  the single-step regret $\Delta_t$ is 0. In the latter case, $\Delta_t = -r(x_t,1)$. If $x_t \in B$, Lemma~\ref{lem:bin-bounds} implies that $r(x_t,1) \ge \mu_B - L\sqrt{n}w(T)$. Since $\mu_B < -L\sqrt{n}w(T)$, we have $r(x_t,1) \ge 2\mu_B$. Hence
\begin{align*}
\sum_{t: x_t \in B, y_t = 1} \Delta_t \le&\ \sum_{t: x_t \in B, y_t = 1}  (-2\mu_B)\\
=&\ |\{t \in [T]: x_t \in B, y_t = 1\}| \cdot 2|\mu_B|\\
\le&\ \left(1+\frac{16c^{-1}\sigma_w^2\ln(2T^4)}{\mu_B^2}\right) \cdot 2|\mu_B|\\
=&\ 2|\mu_B| + \frac{32c^{-1}\sigma_w^2\ln(2T^4)}{|\mu_B|}\\
\le&\ 2|\mu_B|+\frac{32c^{-1}\sigma_w^2\ln(2T^4)}{(2\sqrt{n} L+1)w(T)}\\
\le&\ 2|\mu_B|+\frac{32c^{-1}\sigma_w^2\ln(2T^4)}{w(T)}
\end{align*}
with the last step due to $2\sqrt{n}L+1 \ge 1$. By Lemma~\ref{lem:ball}, any $x \in B$ satisfies $\norm{x} \le R(T)$. Thus by Lipschitz continuity, $r(x,1) \ge r(0,1) - LR(T) > -LR(T)$ for all $x \in B$.  Thus$\mu_B = \E_{x\sim \nu}[r(x,1) \mid x \in B] \ge \E_{x\sim \nu}[-LR(T)] = -LR(T)$, so
\[
\sum_{t: x_t \in B, y_t = 1} \Delta_t \le 2LR(T)+ \frac{32 c^{-1}\sigma_w^2\ln(2T^4)}{w(T)}
\]
as required.
\end{proof}

\lemTotalCommit*

\begin{proof}
For each $t \in [T]$, let $B(t)$ denote the bin to which $x_t$ belongs. Note that $B(t)$ is unique by the construction of $\mathcal{H}$ in \Cref{alg:main} and note that $B(t)$ may not be in $\B$.  Partition the time steps with commits into $S_1 = \{t \in [T]: y_t = 1 \text{ and } \mu_{B(t)} < -(2L\sqrt{n}+1)w(T)\}$ and $S_2 = \{t \in [T]: y_t = 1 \text{ and } \mu_{B(t)} \ge -(2L\sqrt{n}+1)w(T)\}$. Let $\B_1 = \{B \in \B: \exists t \in S_1 \text{ s.t. } B(t) = B\}$ be the set of bins associated with time steps in $S_1$. Then we can write
\begin{align*}
\sum_{t \in S_1} \Delta_t =&\ \sum_{B \in \B_1} \sum_{t \in S_1: B(t) = B} \Delta_t\\
=&\ \sum_{B \in \B_1} \sum_{t: x_t \in B, y_t = 1} \Delta_t
\end{align*}
Then by Lemma~\ref{lem:regret-per-bin}, 
\begin{align*}
\sum_{t \in S_1} \Delta_t \le&\ \sum_{B \in \B_1} \left(2LR(T) + \frac{32 c^{-1}\sigma_w^2\ln(2T^4)}{w(T)}\right)\\
\le&\ |\B_1| \left(2LR(T) + \frac{32 c^{-1}\sigma_w^2\ln(2T^4)}{w(T)}\right)\\
\le&\ |\B| \left(2LR(T) + \frac{32 c^{-1}\sigma_w^2\ln(2T^4)}{w(T)}\right)
\end{align*}
By Lemma~\ref{lem:ball}, every $x \in \cup_{B\in\B} B$ satisfies $\norm{x} \le R(T)$. Thus $\cup_{B\in\B} B$ is fully contained within an $n$-ball of radius $r$. The volume of such a ball is $v_1 R(T)^n$. Each bin in $\B$ is an $n$-cube with side length $w(T)$ and thus has volume $w(T)^n$. Furthermore, the bins in $B$ have no volume overlap by construction, so the total volume of bins in $B$ is $w(T)^n |\B|$. Then $w(T)^n|\B| \le v_1 R(T)^n$. Therefore
\[
\sum_{t \in S_1} \Delta_t \le \frac{v_1 R(T)^n}{w(T)^n} \left(2LR(T) + \frac{32 c^{-1}\sigma_w^2\ln(2T^4)}{w(T)}\right)
\]
Now consider any $t \in S_2$. By definition, $x_t \in B(t) \in \B$, so Lemma~\ref{lem:bin-bounds} implies that $r(x_t,1) \ge \mu_{B(t)} - L\sqrt{n}w(T)$. Since $\mu_{B(t)} \ge -(2L\sqrt{n}+1)w(T)$ by construction of $S_2$, we have $r(x_t,1) \ge -(3L\sqrt{n} + 1)w(T)$. Therefore
\begin{align*}
\sum_{t \in S_2} \left(\max_{y^* \in \{0,1\}} r(x_t,y^*) - r(x_t,1)\right) \le&\ \sum_{t \in S_2} (3L\sqrt{n} + 1)w(T)\\
=&\ |S_2| (3L\sqrt{n} + 1)w(T)\\
\le&\ (3L\sqrt{n} + 1)w(T)T
\end{align*}
Putting it all together,
\begin{align*}
\sum_{t: y_t = 1} \Delta_t =&\ \sum_{t \in S_1} \left(\max_{y^* \in \{0,1\}} r(x_t,y^*) - r(x_t,1)\right) + \sum_{t \in S_2} \left(\max_{y^* \in \{0,1\}} r(x_t,y^*) - r(x_t,1)\right)\\
\le&\ \frac{v_1 R(T)^n}{w(T)^n} \left(2LR(T) + \frac{32 c^{-1}\sigma_w^2\ln(2T^4)}{w(T)}\right) + (3L\sqrt{n} + 1)w(T)T
\end{align*}
as required.
\end{proof}

\thmMain*

\begin{proof}
First assume $\G$ holds. Let $S_3 = \{t \in [T]: y_t = 1 \text{ and } r(x_t,1) < r(x_t, 0)\}$ be the time steps where we committed but we should have abstained, and let $S_4 = \{t \in [T]: y_t = 0 \text{ and } r(x_t,0) < r(x_t, 1)\}$ be the time steps where we should have committed but we abstained. Lemma~\ref{lem:total-commit-regret} bounds the regret of time steps in $S_3$. Since we always commit whenever $x_t \in B$ for some $B \in \B$, $S_4$ can only occur when $x_t \not\in B$ for all $B \in \B$. By construction, any such $x_t$ satisfies $\norm{x_t} > m(T)$ (otherwise the bin containing $x_t$ would be in $\B$). Also, $r(x_t,1) - r(x_t, 0) \le 1$ by assumption. Hence
\begin{align*}
\RegT =&\ \sum_{t =1}^T \Delta_t\\
=&\ \sum_{t \in S_3} \Delta_t + \sum_{t \in S_4} \Delta_t\\
\le&\ \frac{v_1R(T)^n}{w(T)^n} \left(2LR(T) + \frac{32 c^{-1}\sigma_w^2\ln(2T^4)}{w(T)}\right) + (3L\sqrt{n} + 1)w(T)T + \sum_{t=1}^T \bfone(\norm{x_t} > m(T) )\\
=&\ \frac{2Lv_1R(T)^{n+1}}{w(T)^n}  + \frac{32v_1 c^{-1}\sigma_w^2 R(T)^n\ln(2T^4)}{w(T)^{n+1}} + (3L\sqrt{n} + 1)w(T)T + \sum_{t=1}^T \bfone(\norm{x_t} > m(T) )
\end{align*}
Therefore
\begin{align*}
\E[\RegT \mid \G] \le&\ \frac{2Lv_1R(T)^{n+1}}{w(T)^n}  + \frac{32 v_1 c^{-1}\sigma_w^2 R(T)^n\ln(2T^4)}{w(T)^{n+1}} + (3L\sqrt{n} + 1)w(T)T + \sum_{t=1}^T \Pr[\norm{x_t} > m(T) ]\\
=&\ \frac{2Lv_1R(T)^{n+1}}{w(T)^n}  + \frac{32 v_1 c^{-1}\sigma_w^2 R(T)^n\ln(2T^4)}{w(T)^{n+1}} + (3L\sqrt{n} + 1)w(T)T + \sum_{t=1}^T \bar{\nu}(m(T))\\
=&\ \frac{2Lv_1R(T)^{n+1}}{w(T)^n}  + \frac{32 v_1 c^{-1}\sigma_w^2 R(T)^n\ln(2T^4)}{w(T)^{n+1}} + (3L\sqrt{n} + 1)w(T)T + T\bar{\nu}(m(T) )
\end{align*}
Now suppose $\G$ does not hold. Consider an arbitrary $t \in [T]$. If $y_t = 0$, then the regret at time $t$ is at most 1. If $y_t = 1$, we still have $\norm{x_t} \le R(T)$, so by Lipschitz continuity, $r(x_t, 1) \ge -LR(T)$. Therefore $\E[\RegT \mid \neg \G] \le T + LR(T)T$. Lemma~\ref{lem:uniform_concentration} implies that $\Pr[\neg \G] \le 1/T$, so by the law of expectation,
\begin{align*}
\E[\RegT] =&\ \Pr[\neg \G] \E[\RegT \mid \neg \G] + \Pr[\G] \E[\RegT \mid \G]\\
\le&\ \frac{1}{T} \cdot (T+ L R(T) T) + \frac{2Lv_1R(T)^{n+1}}{w(T)^n}  + \frac{32 v_1 c^{-1}\sigma_w^2 R(T)^n\ln(2T^4)}{w(T)^{n+1}} + (3L\sqrt{n} + 1)w(T)T + T\bar{\nu}(m(T))\\
\in&\ O\left(1+ L R(T) +\frac{2Lv_1R(T)^{n+1}}{w(T)^n}  + \frac{32 v_1 c^{-1}\sigma_w^2 R(T)^n\ln(2T^4)}{w(T)^{n+1}} + L\sqrt{n}w(T)T + T\bar{\nu}(m(T))\right)
\end{align*}
We now plug in $w(T) = T^\frac{-1}{n+2}$ and $m(T) = \ln T$. Since $\lim_{T\to\infty} w(T) = 0$, we have $\sigma_w^2 = nL^2w(T)^2 + \sigma^2 \in O(\sigma^2)$. Similarly, for any $k \ge 0$, $R(T)^k =(\ln(T) + \sqrt{n}T^\frac{-1}{n+2})^k \in O((\ln T)^k)$. Thus
\begin{align*}
\E[\RegT] \in&\ O\left(\frac{L \ln T}{T} +\frac{L (\ln T)^{n+1}}{T^\frac{-n}{n+2}} + \frac{\sigma^2 (\ln T)^{n+1}}{T^\frac{-n-1}{n+2}} + LT^\frac{-1}{n+2}T + T\bar{\nu}(\ln T)\right)\\
=&\ O\left((L +\sigma^2)  T^\frac{n+1}{n+2} (\ln T)^{n+1} + T\bar{\nu}(\ln T )\right)
\end{align*}
as required.
\end{proof}

\newpage
\section{Simulations}\label{app:sim}

In this section, we provide some basic simulations to illustrate the performance of our algorithm. Rigorous experimental validation of our algorithm is beyond the scope of the paper, but we hope that these simulations help provide intuition for the behavior of our algorithm. Our experimental setup is as follows:
\begin{enumerate}[
    topsep=0.5ex,
    partopsep=0pt,
    parsep=0pt,
    itemsep=0.5ex]
    \item \emph{Input space.} We use synthetically generated i.i.d. inputs in $\bbr$.
    \item \emph{Input distribution.} We consider two different distributions $\nu$: a Gaussian distribution with $\sigma^2 = 5$ and a Cauchy distribution with $\gamma=1$, both centered at 0. The former is light-tailed, meaning that even reckless exploration is typically fine since very OOD inputs are unlikely. The latter is heavy-tailed, which is where OOD inputs are relatively common and so caution is crucial.
    (which is heavy-tailed).
    \item \emph{Reward function.} We consider the two extremes of possible reward functions: $r(x,1) = 1$ and $r(x,1) = 1 - |x|$. For the former, the task policy generalizes perfectly and the agent should always commit. For the latter, the task policy generalizes poorly and the agent should abstain except for a small region around the origin.
    \item \emph{Noise distribution.} In all cases, the noise variables $\eta_1,\dots,\eta_T$ are sampled from a Gaussian distribution with $\sigma^2 = 1$ centered at 0.
    \item \emph{Time horizons.} We tested time horizons $T$ up to 500,000.
    \item \emph{Algorithms.} In addition to our algorith, we tested three baselines: (1) always commit, (2) always abstain, and (3) \textsc{Lipschitz UCB1}, a standard non-cautious algorithm for Lipschitz bandits. Specifically, \textsc{Lipschitz UCB1} is the uniform-discretization algorithm described in Section 8.2 of \citet{slivkins2019introduction}, instantiated with UCB1 (as suggested in Section 8.1 of the same book). We initially planned to also include a risk-sensitive bandit algorithm but could not find one that would admit an apples-to-apples comparison (see \autoref{sec:related} for details).
\end{enumerate}

Our code can be found at \url{https://github.com/bplaut/abstention-bandits-simulations}. The simulations were not computationally intensive and were run on a personal laptop.

\autoref{fig:simulations} shows the simulation results. For readability, curves with very high regret are truncated. Each algorithm was run 10 times with 10 different seeds. The solid lines are the mean of those runs, and the shaded region indicates the maximum and minimum values across runs. In many cases, the shaded region is not visible because it is so small relative to the scale of the plots. 

The combination of input distribution and reward function results in four concrete scenarios, each corresponding to one of the panels in \autoref{fig:simulations}. The top row corresponds to $r(x,1) = 1$ and the bottom row corresponds to $r(x,1) = 1-|x|$. The left and right columns correspond to Gaussian inputs and Cauchy inputs respectively.

In the top row, $r(x,1) = 1$ means that the optimal strategy is to always commit, meaning that there is no benefit to caution. As a result, \textsc{Lipschitz UCB1} performs well for both Gaussian or Cauchy inputs. Our algorithm also performs well (as it does in all cases). As expected, ``Always abstain'' performs pessimally.

The bottom row is where things get interesting. The reward function $r(x,1) = 1-|x|$ implies that there is a small area of in-distribution inputs near the origin where the agent should commit, but otherwise the agent should abstain. The agent needs to learn the boundaries of this region. ``Always commit'' performs terribly here for both input distributions. ``Always abstain'' performs better, but still exhibits linear regret due to never learning to commit near the origin. In the bottom row, committing on far OOD inputs can have catastrophic costs, so caution becomes relevant. However, for a light-tailed input distribution (bottom left), far OOD inputs are rare enough that caution is not necessary; hence the success of \textsc{Lipschitz UCB1}. The crucial scenario is the bottom right: committing on far OOD inputs is catastrophic \emph{and} the agent actually encounters such inputs. In this scenario, \textsc{Lipschitz UCB1} is abysmal due to its reckless exploration. In contrast, our algorithm explores cautiously and successfully learns where to commit without ever risking catastrophe.
\newcommand\figscale{.49}
\begin{figure}
    \centering
    \includegraphics[width=\figscale\linewidth]{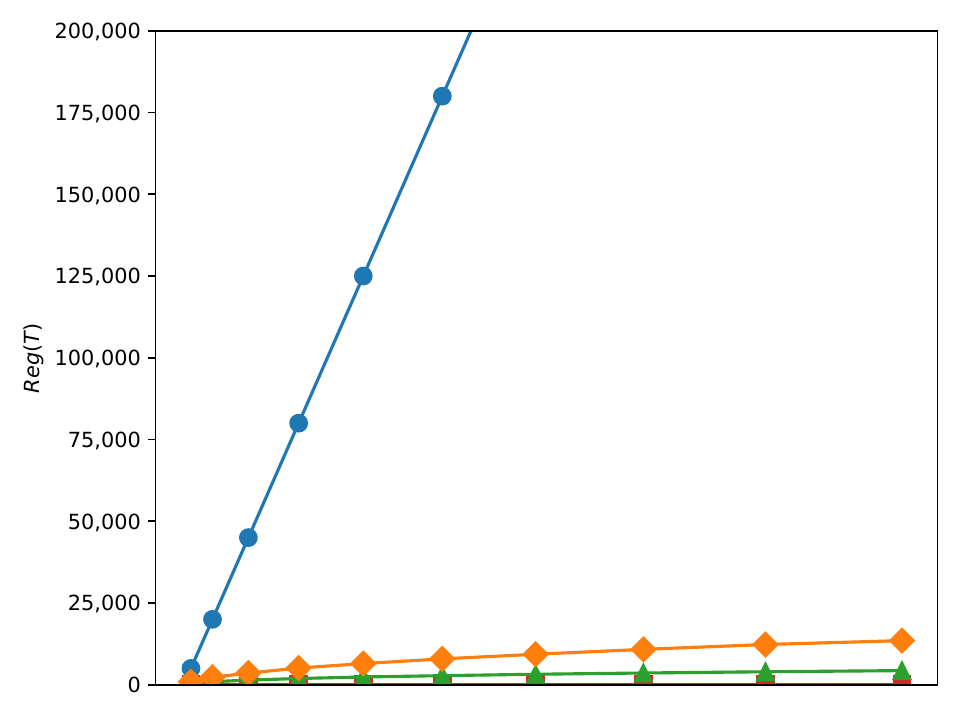}
    \includegraphics[width=\figscale\linewidth]{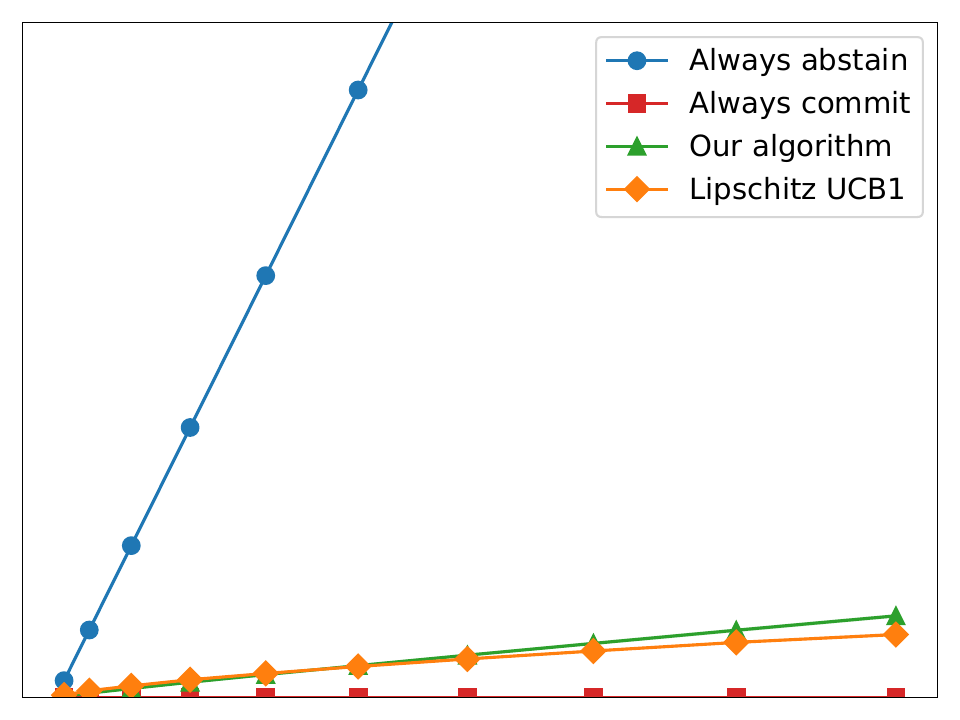}
    \includegraphics[width=\figscale\linewidth]{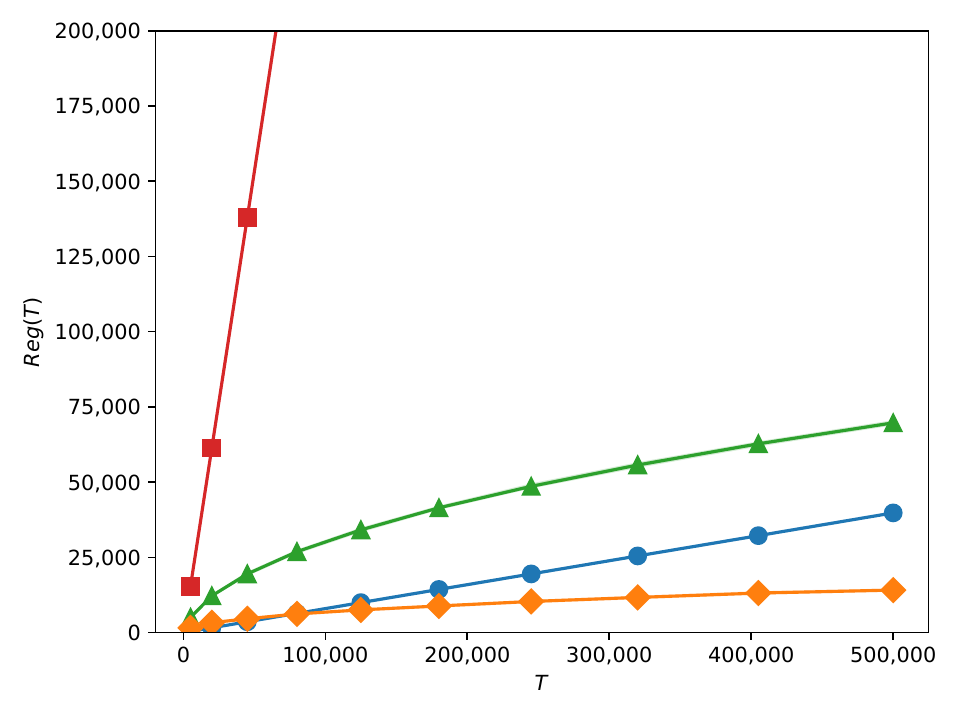}
    \includegraphics[width=\figscale\linewidth]{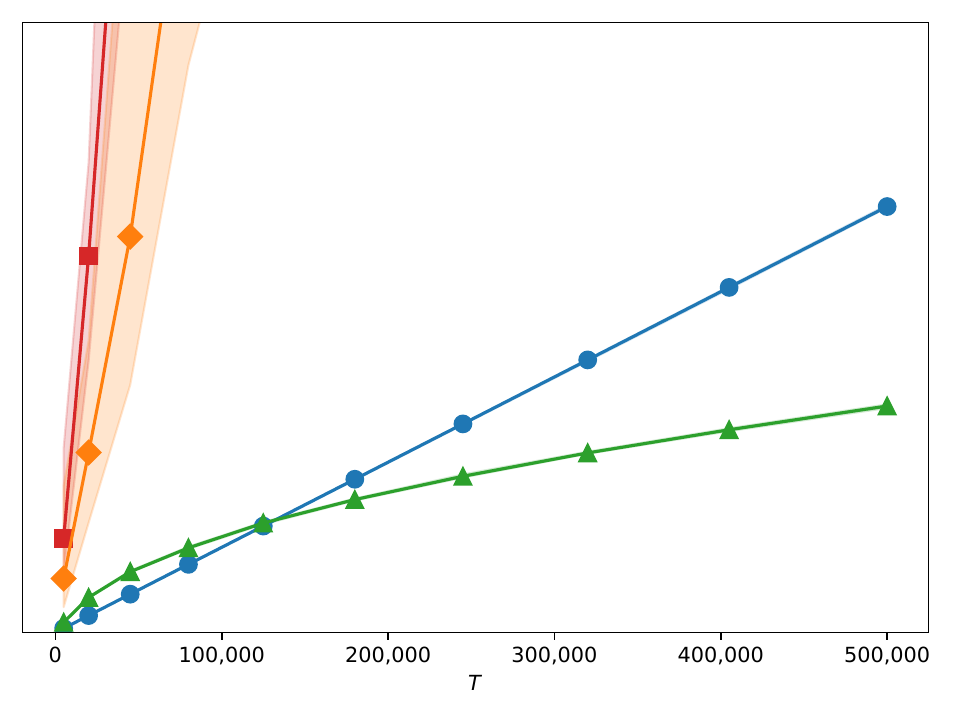}    
    \caption{Simulation results. Top row: $r(x,1) = 1$. Bottom row: $r(x,1) = 1- |x|$. Left column: inputs are i.i.d. from a Gaussian distribution. Right column: inputs are i.i.d. from a Cauchy distribution. Each curve is the mean of 10 runs with different seeds; the shaded region indicates the minimum and maximum of those runs.}
    \label{fig:simulations}
\end{figure}

We can gain further insight into the behavior of algorithm by examining the number of abstentions and certified negative bins over time. \autoref{fig:behavior} shows how these quantities evolve over time for the $r(x,1) = 1- |x|$ setting. (These quantities are less interesting for $r(x,1) = 1$, since no bins will ever be certified negative.) Unlike \autoref{fig:simulations} which compares a variety of values of $T$, \autoref{fig:behavior} studies the single value of $T = 500,000$ and plots the progress of the algorithm across time steps $t$. The left and right plots in \autoref{fig:behavior} correspond to Gaussian inputs and Cauchy inputs, respectively. In both plots, the number of certified negative bins initially increases quickly as the algorithm explores, and then plateaus. The absention rate also stabilizes as a function of the probability mass outside of the trusted region (i.e., how often we get inputs that are too OOD to even explore).

\begin{figure}
    \centering
    \includegraphics[width=\figscale\linewidth]{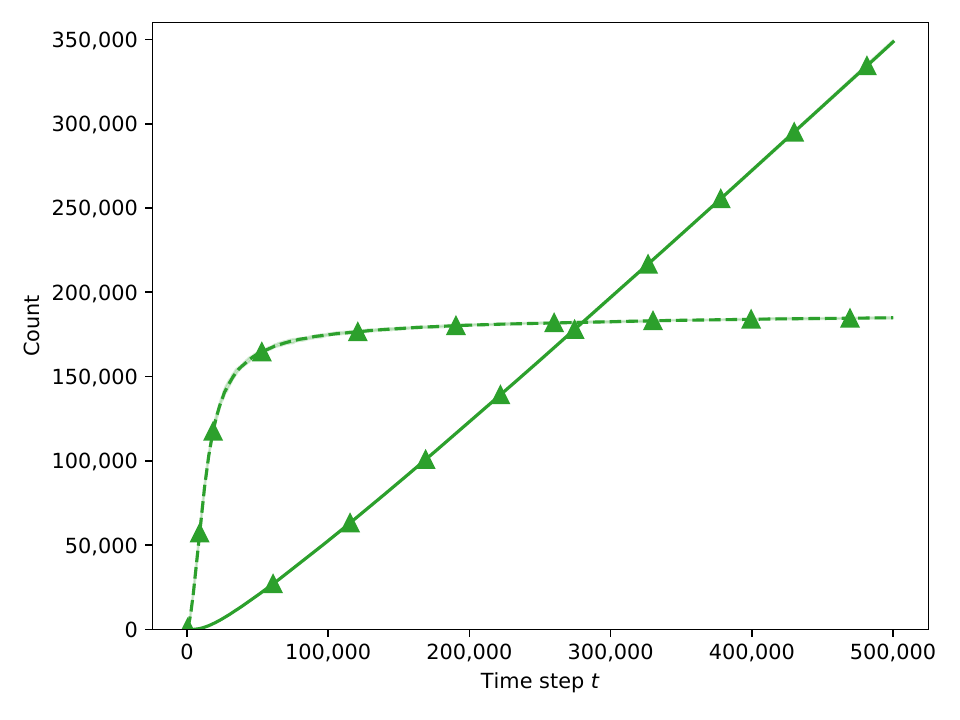}
    \includegraphics[width=\figscale\linewidth]{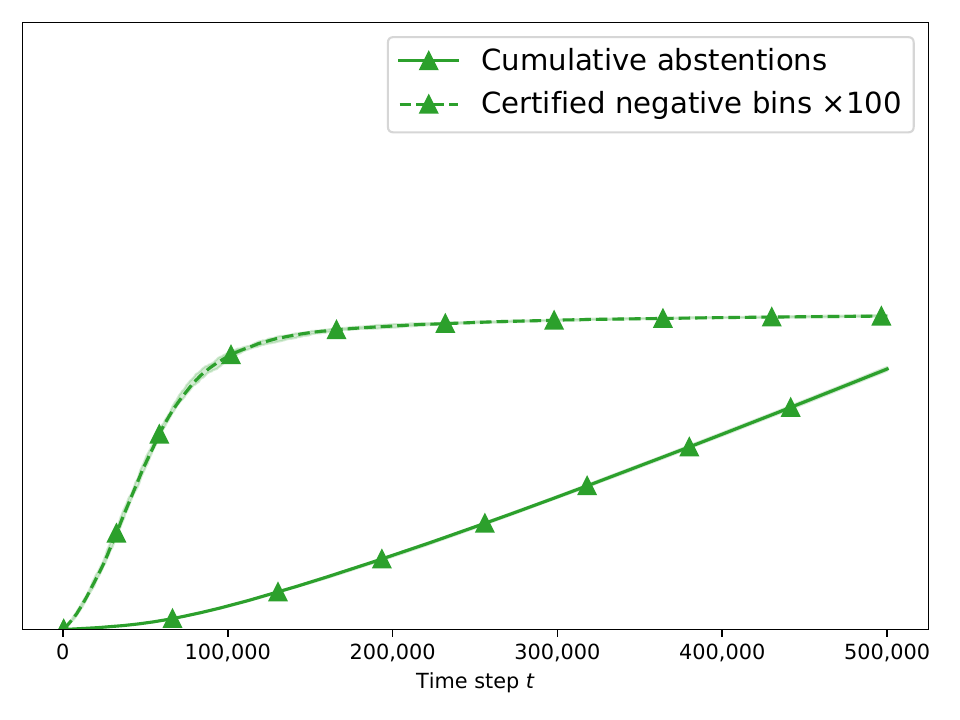}    
    \caption{Abstentions and the number of certified negative bins over time. Both plots correspond to $r(x,1) = 1-|x|$. The left plot and right plot correspond to Gaussian inputs and Cauchy inputs, respectively.}
    \label{fig:behavior}
\end{figure}

\thispagestyle{empty}

\end{document}